\documentclass[journal]{IEEEtran}
\usepackage[pdftex]{graphicx}
\usepackage{booktabs}
\usepackage{subcaption}
\usepackage{algorithm}
\usepackage{hyperref}
\usepackage{algcompatible}
\hypersetup{
     colorlinks   = true,
     citecolor    = green
}
\usepackage{amsfonts}
\usepackage{dblfloatfix}
\usepackage{multirow}
\usepackage{multicol}
\usepackage{graphicx}   
\usepackage{makecell}
\usepackage{xcolor}

\renewcommand{\algorithmiccomment}[1]{\bgroup\hfill$\color{orange}\triangleright$~#1\egroup}
\newcommand{\RN}[1]{%
  \textup{\uppercase\expandafter{\romannumeral#1}}%
}
\usepackage{multirow}
\usepackage{amsmath}
\usepackage{amsthm}
\usepackage{cite}
\newtheorem{assumption}{Assumption}
\newtheorem{lemma}{Lemma}
\newtheorem{theorem}{Theorem}
\newtheorem{corollary}{Corollary}
\newcommand{\vect}[1]{\boldsymbol{#1}}

\begin{document}

\title{GraphHop: An Enhanced Label Propagation Method for Node Classification}
\author{Tian~Xie, Bin~Wang,~\IEEEmembership{Student~Member,~IEEE} and C.-C.~Jay~Kuo,~
\IEEEmembership{Fellow,~IEEE}%
\thanks{Tian~Xie, Bin~Wang and C.-C.~Jay~Kuo are with Ming Hsieh
Department of Electrical and Computer Engineering, University of
Southern California, Los Angeles, CA 90089, USA, e-mails: xiet@usc.edu
(Tian Xie), bwang28c@gmail.com (Bin Wang) and cckuo@ee.usc.edu (C.-C. Jay Kuo).}
}%

\maketitle

\begin{abstract} 

A scalable semi-supervised node classification method on
graph-structured data, called GraphHop, is proposed in this work.  The
graph contains attributes of all nodes but labels of a few nodes.  The
classical label propagation (LP) method and the emerging graph
convolutional network (GCN) are two popular semi-supervised solutions to
this problem. The LP method is not effective in modeling node attributes
and labels jointly or facing a slow convergence rate on large-scale graphs. GraphHop is proposed to its shortcoming. With proper
initial label vector embeddings, each iteration of GraphHop contains two
steps: 1) label aggregation and 2) label update.  In Step 1, each node
aggregates its neighbors' label vectors obtained in the previous
iteration. In Step 2, a new label vector is predicted for each node
based on the label of the node itself and the aggregated label
information obtained in Step 1.  This iterative procedure exploits the
neighborhood information and enables GraphHop to perform well in an
extremely small label rate setting and scale well for very large graphs.
Experimental results show that GraphHop outperforms state-of-the-art
graph learning methods on a wide range of tasks (e.g., multi-label and
multi-class classification on citation networks, social graphs, and
commodity consumption graphs) in graphs of various sizes. Our codes are
publicly available on GitHub \footnote{https://github.com/TianXieUSC/GraphHop}. 

\end{abstract}

\begin{IEEEkeywords}
Graph learning, semi-supervised learning, label propagation, graph 
convolutional networks, large-scale graphs.
\end{IEEEkeywords}

\IEEEpeerreviewmaketitle

\section{Introduction}\label{sec:introduction}

\IEEEPARstart{T}{he} success of deep learning and neural networks often
comes at the price of a large number of labeled data. Semi-supervised
learning is an important paradigm that leverages a large number of
unlabeled data to address this limitation. The need for semi-supervised
learning has arisen in many machine learning problems and found wide
applications in computer vision, natural language processing, and
graph-based modeling, where getting labeled data is expensive and there
exists a large amount of unlabeled data. 

Among semi-supervised graph learning methods, label propagation (LP)
\cite{zhou2003learning, chapelle2006label} has demonstrated good
adaptability, scalability, and efficiency for node classification. Fig.
\ref{fig:label_propagation} gives a toy example of LP on a graph, where
data entities are represented by nodes and edges are either formed with
pairwise similarities of linked nodes or generated by a certain
relationship (e.g.,  social networks).  LP exploits the geometry of data
entities induced by labeled and unlabeled examples.  With fewer labeled
nodes, LP iteratively aggregates label embeddings from neighbors and
propagates them throughout the graph to provide labels for all nodes.
Typically, LP-based techniques have a small memory requirement
\cite{ravi2016large} and a fast convergence rate.  On the other hand,
LP-based methods are limited in their capability of integrating multiple
data modalities for effective learning. Nodes in a graph may contain
attributes and class labels. Only propagating the label information may
not be adequate.  Take the node classification problem for citation
networks as an example.  Each paper is represented as a node with a
bag-of-words attribute vector that describes its keywords and associated
class label. Edges in the graph are formed by citation links between
papers.  The three pieces of information (i.e., node attributes, node
labels, and edges) should be jointly considered to make accurate node
classification. Furthermore, the rules of label propagation in LP
methods are often implemented by a simple lookup table for unlabeled
nodes. They fail to leverage the label information in a probabilistic
way. 

\begin{figure}[!t]
\centering
\includegraphics[width=0.5\textwidth]{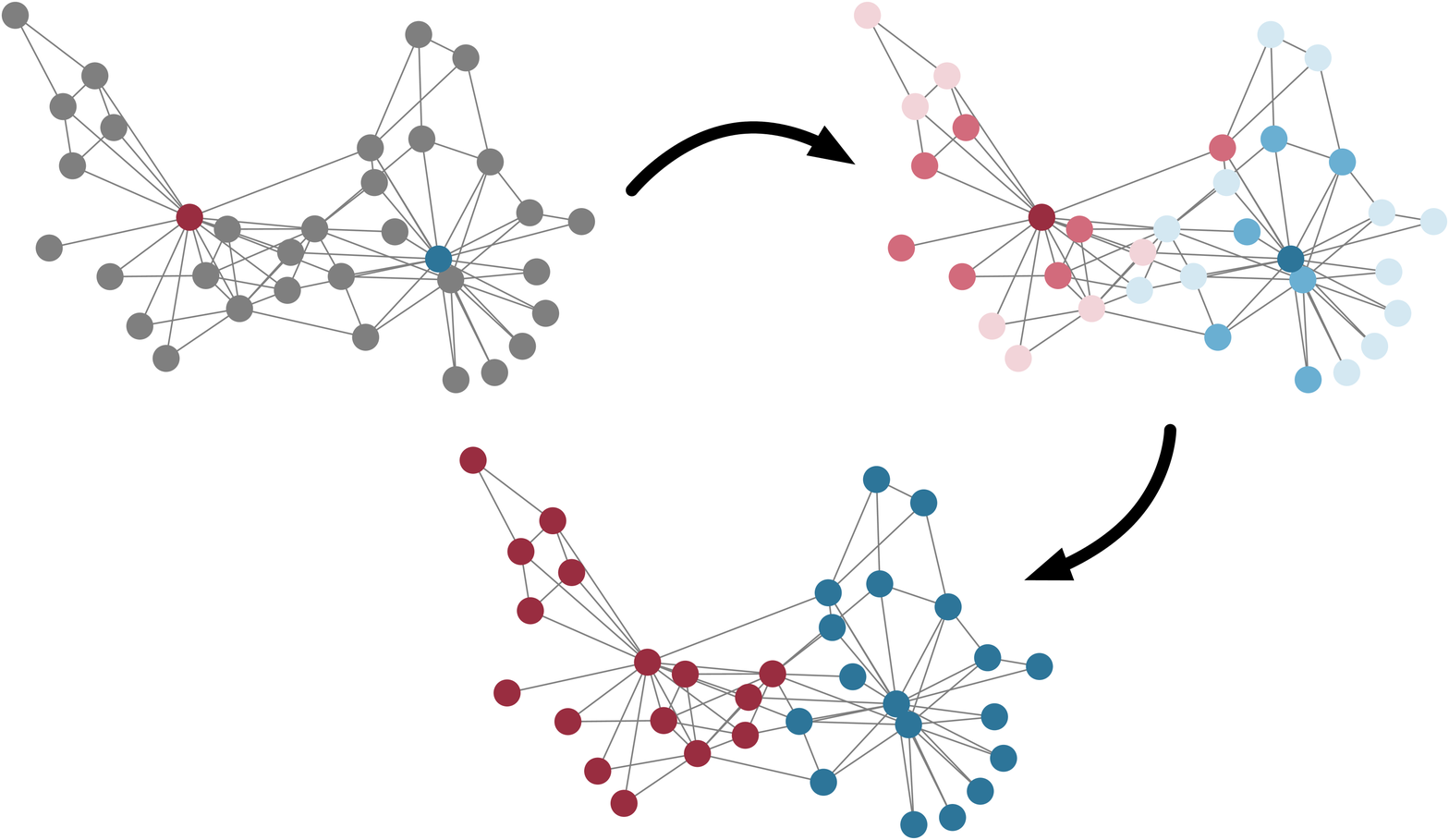}
\caption{A toy example of label propagation (LP) on graphs, where the dark
blue and red nodes represent different labeled examples and the grey
nodes denote unlabeled examples. The color saturation degree of a node
indicates its classification confidence level.}\label{fig:label_propagation}
\end{figure}

Due to the recent success of neural networks \cite{lecun2015deep}, there
has been an effort of applying neural networks into graph-structured
data.  One pioneering technique, known as graph convolutional networks
(GCNs) \cite{kipf2016semi}, has achieved impressive node classification
performance for citation networks.  Instead of propagating label
embeddings as done by LP, GCNs conduct attributes propagation through a
graph with labels as supervision.  On one hand, GCNs incorporate node
attributes, node labels, and edges in model learning. On the other hand,
GCNs fail to exploit the label distribution in the graph structure.
Reliance on labels as supervision limits label update efficiency and
hinders the ability in an extremely small label rate scenario, e.g.
only one labeled example per class. Li, {\em et al.} \cite{li2019label}
proposed a new method called label efficient GCNs.  However, their
method does not induce unlabeled data but only changes the convolutional
filters in model learning. Others \cite{ma2019flexible, qu2019gmnn,
zhang2019bayesian} tried to incorporate label distributions in a
generative model based on special inductive bias assumptions
\cite{chapelle2009semi}. Furthermore, end-to-end training makes
neural-network-based solutions difficult to scale for large graphs.  The
rapid expansion of neighborhood sizes along deeper layers restricts GCNs
from batch training \cite{chiang2019cluster}. There is a trade-off
between training efficiency and the receptive field size
\cite{wu2019simplifying}. 

To address the weaknesses of LP and GCNs, a scalable semi-supervised
graph learning method, called GraphHop, is proposed in this work.
GraphHop fully exploits node attributes, labels, and link structures for
graph-structured data learning. Node attributes and labels are viewed as
signals, which are assumed to be locally smooth on graphs.  Attributes
exist in all nodes while labels are only available in a small number of
nodes (i.e., a typical semi-supervised setting).  GraphHop integrates
two signal types based on an iterative learning process.  Node
attributes are used to predict the label embedding vector at each node
in the initialization stage.  The dimension of the label embedding
vector is the same as the class number, where each element indicates the
probability of belonging to a class. After initialization, each
iteration of GraphHop contains two steps: 1) label aggregation and 2)
label update.  In Step 1, each node aggregates its neighbors' label
embedding vectors obtained in the previous iteration. In Step 2, a new
label embedding vector is predicted for each node based on the label
vector of the node itself and the aggregated label vector obtained in
Step 1.  The iterative procedure exploits multi-hop neighborhood
information, leading to the name of ``GraphHop", for efficient learning.
It enables GraphHop to perform well in an extremely small label rate setting and scale well for very large graphs. We conduct extensive experiments to validate the effectiveness of the proposed GraphHop method. We test a variety of semi-supervised node classification tasks including multi-class classification on various sizes of graphs and multi-label classification on protein networks. The proposed GraphHop method achieves superiorly in terms of prediction accuracy and training efficiency.

The rest of this paper is organized as follows. Some preliminaries are
introduced in Sec.  \ref{sec:preliminaries}. The GraphHop method is
presented in Sec.  \ref{sec:method}. The adoption of batch training and
the small number of parameters makes GraphHop efficient and scalable to
large-scale graphs.  Theoretical analysis is conducted to estimate the
required iteration number of the iterative algorithm in Sec.
\ref{sec:analysis}.  Extensive experiments are conducted to show the
state-of-the-art performance with low memory usage in Sec.
\ref{sec:experiments}.  Comments on related work are made in Sec.
\ref{sec:review}. Finally, concluding remarks are given and future
research directions are pointed out in Sec.  \ref{sec:conclusion}. 

\section{Preliminaries}\label{sec:preliminaries}

In this section, we first formulate the transductive semi-supervised
learning problem on graphs in Sec.  \ref{subsec:problem}.  Then, we
present the smoothness assumption, which holds for most graph
learning methods and provides its empirical evidence in Sec.
\ref{subsec:assumption}.  Finally, we review two commonly used graph
signal processing tools in Sec. \ref{subsec:GSP}. 

\begin{figure*}[ht]
\centering
\includegraphics[width=0.80\linewidth]{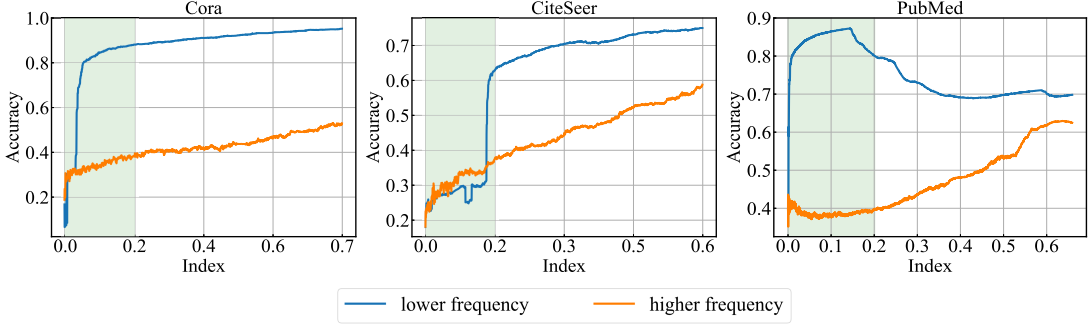}
\caption{The plot of accuracy curves based on lower frequency (in blue)
and higher frequency (in red) for citation datasets, respectively.  The
$x$-axis denotes the selected top-$k$ lower or higher frequency
components in percentage and the top 20\% low frequency region is shaded
in green.} \label{fig:label_signal}
\end{figure*}

\subsection{Problem Statement}\label{subsec:problem}

An undirected graph can be represented by a triple:
\begin{equation}\label{eq:graph_def}
\mathcal{G} = (\mathcal{V}, \vect{A}, \vect{X}),
\end{equation}
where $\mathcal{V}$ denotes a set of $n$ nodes, $\vect{A} \in
\mathbb{R}^{n \times n}$ is the adjacency matrix between nodes, and $\vect{X}
\in \mathbb{R}^{n\times d}$ is the attribute matrix whose
$d$-dimensional row vector is an attribute vector associated with each
node. In the setting of semi-supervised classification, each labeled
node belongs to one class, 
$$
y \in C, \mbox{   where  } C= \{1, \cdots, c\}.
$$
Nodes are divided into labeled and unlabeled sets with their indices denoted by
\begin{equation}\label{eq:labeled_nodes}
\mathcal{L} = \{1, \cdots, l\} \mbox{  and  } \mathcal{U} = 
\{l + 1, \cdots, n\},
\end{equation}
respectively. 

Let $\mathcal{H}$ be the set of $n\times c$ matrices with nonnegative
entries. Matrix 
$$
\vect{H}^T \in \mathcal{H} = (\vect{h}_1, \cdots, \vect{h}_l, \vect{h}_{l+1}, \cdots, \vect{h}_n)
$$ 
is the global label embedding matrix. It can be divided into 
$$
\vect{H}^T = (\vect{H}_l^T, \vect{H}_u^T)
$$
where $\vect{H}_l$ and $\vect{H}_u$ represent labeled and unlabeled examples,
respectively.  For labeled examples $i \in \mathcal{L}$, vector $\vect{h}_{i}$
is an one-hot vector, where $h_{ij} = 1$ if $x_i$ is labeled as $y_i =
j$ and $h_{ij} = 0$ otherwise. For unlabeled examples $i \in
\mathcal{U}$, vector $\vect{h}_{i}$ is a probability vector whose element
$h_{ij}$ is the predicted probability of belonging to class $j$.  To
classify unlabeled node $i$ in the convergence stage, we assign it the
label of the most likely class:
\begin{equation}\label{eq:node_classification}
y_i = \text{argmax}_{j\in{C}}h_{ij}. 
\end{equation}
The goal of transductive graph learning is to infer labels of unlabeled
nodes (i.e., matrix $\vect{H}_u$), given attribute matrix $\vect{X}$, adjacency matrix
$\vect{A}$, and labeled examples $\vect{H}_l$. In the extremely small label rate
case, the number of examples in the labeled set is much smaller than
that in the unlabeled set; namely,
$$
|\mathcal{L}| \ll |\mathcal{U}|. 
$$

\subsection{Assumption}\label{subsec:assumption}

Both the attribute matrix and the label embedding matrix can be treated
as signals on graphs. For example, the $d$-dimensional column vectors of
attribute matrix $\vect{X}$ can be viewed as a $d$-channel graph signal.  One
assumption for semi-supervised learning is that data points that are
close in a high density region should have similar attributes and
produce similar outputs. This property can be stated qualitatively
below. 
\begin{assumption}\label{assumption}
The node attribute and label signals are smooth functions 
on graphs.
\end{assumption}
The smoothness of attributes and labels can be verified by evaluating
their means and variances as a function of the length of the shortest
path between every two nodes, which is usually measured in terms of the
hop count. Here, we propose an alternative approach to verify the smooth
label assumption and conduct experiments on the Cora, CiteSeer and
PubMed datasets (see Sec.  \ref{sec:experiments} for more details) using
the following procedure. 

\noindent
{\bf Step 1.} Compute the graph Fourier basis 
$$
\vect{Q} = [\vect{q}_1, \vect{q}_2, \cdots, \vect{q}_n]
$$
and its corresponding frequency matrix 
$$
\vect{\Lambda} = \text{diag}(\lambda_1, \lambda_2, \cdots, \lambda_n)
$$ 
from graph Laplacian 
$$
\vect{L} = \vect{D} - \vect{A}, \mbox{  where  } \vect{D}=\text{diag}(a_{11}, a_{22}, \cdots, a_{nn})
$$ 
where frequencies in $\vect{\Lambda}$ are arranged in an ascending order:
\begin{equation}
\lambda_1 < \lambda_2 \leq \lambda_3 \leq \cdots \leq \lambda_n.
\end{equation}
Similarly, we can define the reverse-ordered frequency matrix,
$$
\vect{\Lambda}_r = \text{diag}(\lambda_n, \lambda_{n-1}, \cdots, \lambda_1),
$$
and find the corresponding basis matrix, 
$$
\vect{Q}_r = [\vect{q}_n, \vect{q}_{n-1}, \cdots, \vect{q}_1].
$$ 
It is well known that the basis signals associated with lower
frequencies (i.e., smaller eigenvalues) are smoother on the graph
\cite{zhu2009introduction}. 

\noindent
{\bf Step 2.} Compute the top-$k$ lowest or highest frequency components
of labels via
$$
\vect{\tilde{Y}} = \vect{Q}[:k]^T\vect{Y} \mbox{   or   } \vect{\tilde{Y}}_r = \vect{Q}_r[:k]^T\vect{Y},
$$
where each row of $Y$ denotes the one-hot embedding of a node label.
Then, we reconstruct labels for all nodes via 
$$
\vect{\hat{Y}} = \vect{Q}[:k]\vect{\tilde{Y}} \mbox{   or   } \vect{\hat{Y}}_r = \vect{Q}_r[:k]\vect{\tilde{Y}}_r.
$$
Make a prediction based on the reconstructed label embedding matrix, where
classes of node $i$ are decided by Eq. (\ref{eq:node_classification}).
The above computation yields two classification results for each node -
one using smooth components ($\vect{\hat{Y}}$) while the other using highly
fluctuating components ($\vect{\hat{Y}_r}$). 

We add the number of Laplacian frequency components incrementally,
reconstruct label embeddings for classification, and show two accuracy
curves for each dataset in Fig.  \ref{fig:label_signal}, where the one
using low frequency components is in blue and the one using high
frequency components is in red.  The blue curve rises up quickly while
the red curve increases slowly in all three cases, which support our assumption that label signals are smooth on graphs. Interestingly, the
performance does drop when adding more high frequency components for
PubMed. 

\subsection{Smoothening Operations}\label{subsec:GSP}

Based on the smoothness assumption of label signals, we expect that a
smoothening operation on node attributes and labels helps achieve better
classification results. There are several common ways to achieve the
smoothening effect, which can be categorized into the following three types. 
\begin{enumerate}
\item {\em Average attributes or label embedding vectors of neighboring
nodes.} It is well known that the average operation behaves like a
lowpass filter \cite{wu2019simplifying}. Another reason for taking the average is that nodes have
different numbers of $m$-hop neighbors, denoted by $\Omega_m$, where
$m=1,2,\cdots$. By averaging the attributes/labels of nodes with the
same hop distance, we can consider the impact of $m$-hop neighbors for
all nodes uniformly. 
\item {\em Use regression for label prediction.} We need to predict
labels for a great majority of nodes based on attributes of all nodes
and labels of a few nodes. One way for prediction is to train a
regressor. In the current setting, the logistic regression (LR)
classifier is a better choice due to the discrete nature of the output.
Regression (regardless of a linear or a logistic regressor) is
fundamentally a smoothening operation since it adopts a fixed but
smaller model size to fit a large number of observations. 
\item {\em Incorporate a smooth regularization term in an objective
function.} One can introduce a quadratic penalty term and, at the same
time, remain consistent with the initial labeling. This is used in the
label propagation (LP) method \cite{zhou2003learning, chapelle2006label}
as elaborated below. 
\end{enumerate}

The LP method can be formally defined as a minimization of cost function
\begin{equation}\label{equ:graph_regularizer}
J(\vect{H}) = \underbrace{||\vect{H}_l - \vect{Y}_l||_F^2}_{\text{Least square penalty}} +
\underbrace{\alpha \text{Tr}(\vect{H}^\top \vect{LH})}_{\text{Laplacian regularization}},
\end{equation}
where $\vect{Y}_l$ is the one-hot label embedding matrix of labeled nodes,
$||\cdot||_F$ is the Frobenius norm, $\alpha$ is a parameter controlling
the degree of the regularization, and $\vect{L}$ is the unnormalized graph
Laplacian. The optimization solution of embedding matrix $\vect{H}$ in Eq.
(\ref{equ:graph_regularizer}) can be derived equivalently by applying a
embedding propagation process, where the $t$th iteration update is
\begin{equation}\label{equ:pre_label_propagation}
\vect{H}^{(t + 1)} = \alpha \vect{D}^{-\frac{1}{2}}\vect{AD}^{-\frac{1}{2}}\vect{H}^{(t)} + (1 -
\alpha) \vect{H}^{(0)},
\end{equation}
where $\vect{H}^{(0)}_l = \vect{H}_l$ and zeros for unlabeled examples. The update
rule can be described by replacing the current label embeddings with
their averaged neighbors' through a look up table plus the initially
labeled examples. As shown in Eqs. (\ref{equ:graph_regularizer}) and (\ref{equ:pre_label_propagation}), we see that the LP method exploits
smoothening operations of the first and the third types, respectively. 

A closed-form solution to Eqs.  (\ref{equ:graph_regularizer}) and
(\ref{equ:pre_label_propagation}) can be equivalently cast in form of
\begin{equation}\label{equ:optimized_solution}
\vect{\hat{H}} = (1 - \alpha)(\vect{I} - \alpha \vect{L})^{-1} \vect{H}^{(0)}.
\end{equation}

Despite the simple and efficient propagation process in Eq.
(\ref{equ:pre_label_propagation}), the LP method is limited in its
learning abilities in label embedding vector update in the following
three aspects.
\begin{itemize}
\item Only label embeddings of immediate (one-hop) neighbors are being
propagated at each iteration according to Eq. (\ref{equ:pre_label_propagation}).  
It could be advantageous to consider label embeddings of nodes in a
larger neighborhood. 
\item The availability of node attributes is not leveraged.
\item The current label vectors of unlabeled nodes are fully replaced (Eq. (\ref{equ:pre_label_propagation})) by
propagated embeddings from neighbors of the previous iteration.  The
simple label update rule limits the ability to learn the correlation of
label embeddings between the node itself and its neighbors. 
\end{itemize}

\begin{figure*}[!t]
\centering
\includegraphics[width=\textwidth]{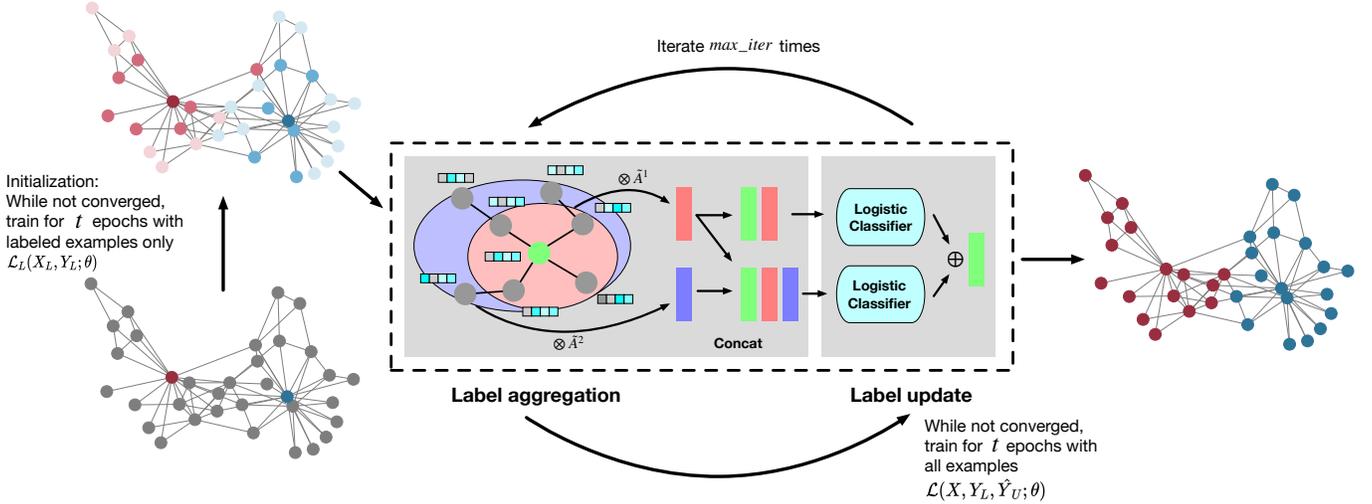}
\caption{Overview of the proposed GraphHop method, where the left
subfigure illustrates the initialization stage, the middle subfigure
shows the two steps (label propagation and label update) in the
iteration stage, and the converged result is given in the right
subfigure.} \label{fig:graphhop_model}
\end{figure*}

\section{GraphHop Method}\label{sec:method}

The GraphHop method is introduced in this section. It is an iterative
algorithm that contains an initialization stage and an iteration stage
as shown in Fig. \ref{fig:graphhop_model}. In the initialization stage,
it predicts the initial label embedding vector for all nodes based on
attribute vectors via regression. In the iteration stage, it focuses on
label propagation and update. No attributes information is needed in the
iteration stage and, as a result, its model size is small. In
particular, GraphHop has the following novel features to address the
limitations of the LP method as described in the last section. 
\begin{itemize}
\item It exploits both attribute and label embeddings of one- or
multi-hop neighbors. 
\item It leverages the information of node attributes in the
initialization stage. 
\item The label vectors of unlabeled nodes are predicted through
regression of (rather than being replaced by) label embeddings from one-
or multi-hop neighbors of the previous iteration. The regression-based
update rule, which is the smoothening operation of the second type as
mentioned above, boosts the learning power of GraphHop. 
\end{itemize}

\subsection{Initialization}\label{subsec:initialization}

The goal of the initialization stage is to offer good initialize label
embedding vectors for all nodes. Recall that the label embedding vector
is a $C$-dimensional vector whose elements indicate the probability of
a node belonging to class $c=1, \cdots, C$. If a node is a labeled one,
its initial embedding is a one-hot (or multi-hot in multi-label classification) vector. If a node has no label, we
can train an LR classifier that uses the attributes of the node itself
as well its multi-hop neighbors to predict the initial embeddings.  This
is justified by the underlying assumption of semi-supervised graph
learning; namely, nodes of spatial proximity should have similar
attributes and produce similar outputs (i.e.,  labels). 

The input to the LR classifier is the averaged attributes of
$m$-hops, $m=0,1,\cdots,M$, where $m=0$ denotes the self node. For node
$j$, we have the following averaged attribute vectors of dimension $d$:
\begin{equation}\label{equ:attri_aggregation_1}
\vect{x}_{j,m}= \frac{\sum_{l\in \Omega_m (j)} \vect{x}_l}{|\Omega_m (j)|}, 
\quad m=0,1,\cdots,M,
\end{equation}
where $\Omega_m (j)$ denotes the $m$-hop neighbors of node $j$.  This
yields an aggregated attribute matrix, $\vect{X}_M \in \mathbb{R}^{n \times
d(M+1)}$, whose $d(M+1)$-dimensional row vector is the concatenation of
these $(M+1)$ vectors associated with each node. Mathematically, we
can express it as
\begin{equation}\label{attri_aggregation_2}
\vect{X}_M = \|_{0\le m \le M}\; \vect{\tilde{A}}^{m} \vect{X},
\end{equation}
where $\|$ denotes column-wise concatenation, $\vect{\tilde{A}}^m$ is the
normalized $m$-hop adjacency matrix, and $m$ is the hop index.  The
output of the LR classifier is the initial label embeddings matrix,
denoted by $\vect{H}^{(0)}\in \mathbb{R}^{n \times C}$, of all nodes.  This
aggregation procedure is illustrated in Fig.
\ref{fig:detail_label_propagation}. 

We first use the labeled examples to determine the model parameters of the
LR classifier in the training phase. Afterwards, we use the trained
LR classifier to infer the initial label embeddings of unlabeled
nodes. Mathematically, this is expressed as
\begin{equation}\label{equ:initial_infer}
\vect{H}^{(0)} = \mathrm{p}_\text{model}(\vect{Y}|\vect{X}_M;\vect{\theta}),
\end{equation}
where $\mathrm{p}_\text{model}$ denotes the predictor based on the LR
classifier, $\vect{Y}$ is the label embeddings of known examples and $\vect{\theta}$
is the set of learned model parameters. 

\begin{figure}[ht]
\centering
\includegraphics[width=0.48\textwidth]{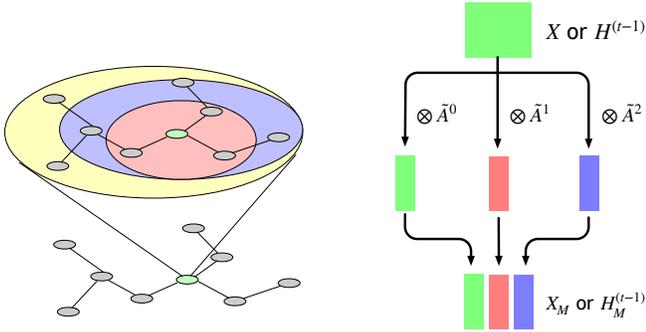}
\caption{Illustration of attribute and label aggregation in GraphHop,
where multi-hops attributes are first averaged and then 
concatenated to form input to the LR classifier.} 
\label{fig:detail_label_propagation}
\end{figure}

\subsection{Iteration}\label{subsec:iteration}

After initialization, the GraphHop method processes label embedding
vectors via a two-step procedure at each iteration as elaborated below. 

\subsubsection{Step 1: Label Aggregation}

The label aggregation mechanism is exactly the same as the attribute
aggregation mechanism as stated in Sec. \ref{subsec:initialization}.
By modifying Eq. (\ref{attri_aggregation_2}) slightly, we have the 
following label aggregation formula
\begin{equation}\label{label_aggregation}
\vect{H}_M^{(t - 1)} = \|_{0\le m \le M} \vect{\tilde{A}}^{m}\vect{H}^{(t-1)}, \quad t=1,2,\cdots.
\end{equation}
The label aggregation step with $M=2$ is shown in Figs.
\ref{fig:graphhop_model} and \ref{fig:detail_label_propagation} as an
example, where the green node is the center node, nodes in the red
region are one-hop neighbors and nodes in the blue region are two-hop
neighbors. The averaged label embeddings of one-hop neighbors and
two-hop neighbors yield two output label embedding vectors,
respectively. They are concatenated with the label embedding of the
center node, which serves as the input to the label update step.  As
shown in Fig. \ref{fig:graphhop_model}, we can form multiple aggregated outputs and for each used as input to label update step for ensembles. Yet, a larger $M$ value results in larger memory and computational
complexity.  Even with the one-hop connection (i.e., $M=1$), the
propagation region is still increasing due to iteration. This is not
available in the model of \cite{wu2019simplifying} (see Eq.
(\ref{equ:simplify_gcn})). In experiments, we focus on the case with
$M=2$.

\subsubsection{Step 2: Label Update}

A label update operation is conducted on the propagated label embeddings
to smoothen the label embeddings furthermore.  For traditional LP, the
update is formulated as a simple replacement as given in Eq.
(\ref{equ:pre_label_propagation}). This update rule does not explore
embedding correlations between nodes and their neighbors fully.  For
improvement, we propose to train the label update operation and view it
as a predictor from concatenated label embeddings $\vect{H}_M^{(t-1)}$ at
iteration $(t-1)$ to the new label embeddings at iteration $t$. 
Mathematically, we have
\begin{equation}\label{label_propagation}
\vect{H}^{(t)} = \mathrm{p}_\text{model}(\vect{Y}|\vect{H}_M^{(t-1)};\vect{\theta}).
\end{equation}
Here, we choose multiple LR classifiers (one for each $m$) as the
predictors and average the predicted label embeddings as shown in Fig.
\ref{fig:graphhop_model}. Note that there are many choices for the
predictor in Eq. (\ref{label_propagation}), say, \cite{ou2016asymmetric,
kingma2013auto, goodfellow2014generative}. The choice for the LR
classifier is because of its efficiency and smaller model size. 
The implementation details of the LR classifier are elaborated in
Appendix \ref{sec:LR}.

Finally, the GraphHop method is summarized by pseudo-codes in Algorithm
\ref{alg:graphhop}. 

\begin{algorithm}
\caption{GraphHop}
\label{alg:graphhop}
\begin{algorithmic}[1]
    \STATE \textbf{Input}: Graph $\vect{A}$, attributes $\vect{X}$, label vectors $\vect{H}_l$
    \STATE \textbf{Output}: Label vectors $\vect{H}_u$

    \STATE \textbf{Initialization}:
    \STATE $\vect{X}_M\gets$ calculate Eq. (\ref{attri_aggregation_2})
    \WHILE{not converged}
        \FOR{each batch} \label{alg_line:batch_1}
            \STATE Compute $\vect{g}\gets\nabla\mathcal{L}_L(\vect{X}_L, \vect{Y}_L;\vect{\theta})$ in Eq. (\ref{equ:label_loss})
            \STATE Conduct Adam update using gradient estimator $\vect{g}$
        \ENDFOR
    \ENDWHILE
    \STATE $\vect{H}^0 \gets$ calculate Eq. (\ref{equ:initial_infer}) 
    \STATE \textbf{Iteration}:
    \FOR{iteration $\in [1, ..., max\_iter]$}
        \STATE $\vect{H}^{(t - 1)}_M\gets$ calculate Eq. (\ref{label_aggregation}) \COMMENT{{\color{orange} \textit{label aggregation}}}
        \WHILE{not converged}
            \FOR{each batch} \label{alg_line:batch_2}
                \STATE Compute $\vect{g}\gets\nabla\mathcal{L}(\vect{X}, \vect{H}; \vect{\theta})$ in Eq. (\ref{equ:final_loss})
                \STATE Conduct Adam update using gradient estimator $\vect{g}$
            \ENDFOR
        \ENDWHILE
        \STATE $\vect{H}^{(t)}\gets$ calculate Eq. (\ref{label_propagation}) \COMMENT{{\color{orange} \textit{label update}}}
    \ENDFOR
\end{algorithmic}
\end{algorithm}

\section{Analysis}\label{sec:analysis}

\subsection{Convergence Analysis}\label{sec:convergence_analysis}

We analyze the relationship between the lower bound of iteration number,
label rate, and the hops number in the label propagation step.  The
higher-order (large hops) of nodes was efficiently approximated within
$k$-hop neighbors in \cite{zhang2018link}. Here, we call the
approximation of each unlabeled node is \em{sufficient} if at least one
initially labeled examples are captured in its $k$-hop surroundings
after iteration, and otherwise \em{insufficient}. If all unlabeled nodes
are sufficient, we call the iteration sufficient. 

We define the shortest distance between nodes $u$ and $v$ as $dist(u,
v)$ in the graph. Given initially labeled examples, we partition node
set $\mathcal{V}$ into several subsets
$$
\mathcal{V}_0, \cdots, \mathcal{V}_i, \cdots, \mathcal{V}_I,
$$ 
where $\mathcal{V}_0$ denotes the set of initially labeled nodes,
$$
\mathcal{V}_i = \{u|\text{min}_{v\in \mathcal{V}_l}dist(u, v) = i\}
$$
is the subset of unlabeled nodes that have the same minimum distance
(i.e., of $i$ hops) to one of labeled nodes, and $I$ is the maximum
distance between an unlabeled node and a labeled one.  Since each node
will aggregate $k$-hop neighbors embeddings at one iteration, we have 
the following Lemma. 
\begin{lemma}\label{lemma}
Given the maximum hop $k = \text{max}\{K\}$ covered in each label
propagation step, the node predictions in $\mathcal{V}_i$ will be
insufficient until the propagation of iteration $i / k$ is finished. 
\end{lemma}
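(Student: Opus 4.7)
The plan is to prove Lemma~\ref{lemma} by tracking, as a function of the iteration index $t$, the \emph{receptive field} of each node---that is, the set of nodes whose initial label embeddings can influence the embedding computed at iteration $t$---and then matching this receptive field against the distance-based partition $\{\mathcal{V}_i\}$.

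First I would formalize the receptive field. Let $R_t(u)$ denote the set of nodes whose initial embedding $\vect{h}_v^{(0)}$ enters the computation of $\vect{h}_u^{(t)}$. The aggregation rule in Eq.~(\ref{label_aggregation}) together with the regression-based update in Eq.~(\ref{label_propagation}) give the recursion $R_t(u)=\bigcup_{v:\,\text{dist}(v,u)\le k} R_{t-1}(v)$ with base case $R_0(u)=\{u\}$. A straightforward induction on $t$ then yields $R_t(u)=\{v\in\mathcal{V}:\text{dist}(v,u)\le tk\}$: each additional iteration expands the reachable set by exactly $k$ hops, because one label-aggregation step pulls information in from the $k$-hop ball, and the subsequent regression---being a pointwise function of the aggregated input at $u$---does not enlarge that support any further.

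Next I would combine this growth estimate with the definition of $\mathcal{V}_i$. For $u\in\mathcal{V}_i$ we have $\min_{v\in\mathcal{V}_0}\text{dist}(u,v)=i$ by construction, so $R_t(u)\cap\mathcal{V}_0\ne\emptyset$ iff $tk\ge i$, i.e.\ $t\ge i/k$. While $t<i/k$, no initially labeled node lies in the support of $\vect{h}_u^{(t)}$, and hence by the sufficiency criterion stated immediately before the lemma the prediction at $u$ is insufficient. This is exactly the assertion of Lemma~\ref{lemma}.

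The main obstacle is justifying the clean equality $R_t(u)=\{v:\text{dist}(v,u)\le tk\}$; once it is in hand, the rest of the proof reduces to restating the distance-based definition of $\mathcal{V}_i$. A subtle point to address is that the LR classifier in Eq.~(\ref{label_propagation}) is refit at every iteration using $\mathcal{L}$, so in principle labeled information could reach every node indirectly through the shared parameter vector $\vect{\theta}$ rather than through message passing. I would therefore make explicit that the lemma concerns the propagation of label \emph{signals} through the graph structure---the quantity actually controlled by the $k$-hop aggregation---and that the receptive-field bound applies to this graph-propagated component, which is precisely what ``captured in the $k$-hop surroundings'' refers to in the definition of sufficiency.
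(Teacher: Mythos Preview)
The paper does not actually prove Lemma~\ref{lemma}; it is asserted directly from the observation that ``each node will aggregate $k$-hop neighbors embeddings at one iteration'' and then used as input to the proof of Theorem~\ref{theorem}. Your receptive-field argument is the natural formalization of that observation: the induction $R_t(u)=\{v:\text{dist}(v,u)\le tk\}$ makes precise why $i/k$ iterations are needed before a node in $\mathcal{V}_i$ can see any labeled example. So your proposal is correct and simply fills in what the paper leaves implicit.

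Your closing caveat about the LR classifier leaking label information through the shared parameter $\vect{\theta}$ is well taken and goes beyond anything the paper discusses. The paper's definition of sufficiency---``at least one initially labeled example is captured in its $k$-hop surroundings after iteration''---is purely structural, so your reading that the lemma tracks graph-propagated signal rather than parametric influence is the right one; you could state this in one sentence rather than flagging it as an obstacle.
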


Based on Lemma \ref{lemma}, we can derive the number of sufficient nodes
after $t$ iterations.
\begin{theorem}\label{theorem}
Let $G$ be a graph with $n$ nodes, and the maximum degree of all nodes
in $d (d\neq 1)$. The number of initially labeled nodes is $j$. Then after
$t$ iterations, at most $\text{min}\{n, jd\frac{d^{kt} - 1}{d - 1}\}$
nodes are sufficient. 
\end{theorem}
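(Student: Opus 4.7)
The plan is to combine Lemma~\ref{lemma} with a ball-growth estimate around each initially labeled node. First I would reformulate Lemma~\ref{lemma} as: a node $u\in\mathcal{V}_i$ cannot be sufficient after $t$ iterations unless $\lceil i/k\rceil\le t$, equivalently $i\le kt$. Combined with the observation that once a $k$-hop aggregation captures a labeled node the same property holds in all later iterations (sufficiency is monotone in $t$), the sufficient set after $t$ iterations is contained in $\bigcup_{i=0}^{kt}\mathcal{V}_i$, which is precisely the set of vertices whose graph distance from the labeled set $\mathcal{V}_0$ is at most $kt$. The counting task then reduces to bounding the size of the $kt$-hop neighborhood of $j$ seeds in a graph of maximum degree $d$.

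Next I would estimate, by a BFS argument from a single seed $v\in\mathcal{V}_0$, the number of nodes at exactly $i$ hops from $v$. At hop $1$ there are at most $d$ neighbors; each subsequent frontier lies in the neighborhood of the previous frontier and therefore grows by a factor of at most $d$; hence the frontier at hop $i$ has size at most $d^{i}$. Summing the geometric series,
\begin{equation}
\sum_{i=1}^{kt} d^{i}=d\cdot\frac{d^{kt}-1}{d-1},
\end{equation}
which is legitimate because $d\neq 1$ and matches the coefficient of $j$ in the theorem. (The tighter expansion $d(d-1)^{i-1}$ that discounts the edge back to the parent is available but unnecessary.)

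Finally I would apply a union bound across the $j$ seeds,
\begin{equation}
\Bigl|\bigcup_{v\in\mathcal{V}_0}B_v(kt)\Bigr|\le\sum_{v\in\mathcal{V}_0}|B_v(kt)|\le jd\frac{d^{kt}-1}{d-1},
\end{equation}
and intersect with the whole vertex set to produce the $\min\{n,\cdot\}$ in the statement.

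The only step requiring any care is the first reformulation: Lemma~\ref{lemma} as phrased pinpoints when a node \emph{turns} sufficient, so I must record explicitly that sufficiency persists across later iterations before identifying the sufficient set with a $kt$-ball. Everything after that is a routine BFS plus geometric series. The bound is intentionally loose because it ignores both overlaps among the $j$ balls and the back-edge refinement in the BFS; accounting for either would only sharpen the inequality and therefore remain consistent with the stated upper bound.
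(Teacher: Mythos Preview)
Your proposal is correct and follows essentially the same route as the paper: identify the sufficient set after $t$ iterations with the $kt$-hop neighborhood of the labeled seeds via Lemma~\ref{lemma}, bound each distance-$i$ shell by $jd^{i}$ using the maximum-degree BFS expansion, sum the geometric series, and cap at $n$. The paper's write-up is terser (it bounds $|\mathcal{V}_i|\le jd^i$ directly and sums over the disjoint shells rather than phrasing it as per-seed balls plus a union bound), and it handles the seeds $\mathcal{V}_0$ by the same ``$|\mathcal{L}|\ll|\mathcal{U}|$'' dismissal you implicitly make when starting your sum at $i=1$; but the mathematical content is identical.
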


\begin{proof}
Since $|\mathcal{L}| \ll |\mathcal{U}|$ in semi-supervised learning, we ignore the
difference between the number of unlabeled nodes (i.e. $|\mathcal{U}|$)and all
nodes in the graph (i.e. $|\mathcal{L}|+|\mathcal{U}|$). According to Lemma \ref{lemma},
after $t$ iterations, the nodes in subsets $\mathcal{V}_1$,
$\mathcal{V}_2$, $\cdots$, $\mathcal{V}_{kt}$ are sufficient. Then, we
have
\begin{equation}\label{equ:theorem}
\begin{aligned}
  |\mathcal{V}_1\cup\mathcal{V}_2\cup \cdots \cup\mathcal{V}_{kt}| & 
  = |\mathcal{V}_1| + |\mathcal{V}_2| + \cdots + |\mathcal{V}_{kt}|\\
  & \leq jd + jd^2 + ... + jd^{kt} \\
  & = jd\frac{d^{kt} - 1}{d - 1}
\end{aligned}
\end{equation}
The first equality is because that $\mathcal{V}_1, \mathcal{V}_2, ...,
\mathcal{V}_{kt}$ are mutually disjoint. Each node has only one unique
minimum distance to labeled set $\mathcal{V}_0$ so that they can only be
assigned to one specific subset. The inequality is due to the use of the
maximum degree $d$ for every node.  Apparently,
$|\mathcal{V}_1\cup\mathcal{V}_2\cup \cdots \cup\mathcal{V}_{kt}|\leq n$.
Thus, we get
\begin{equation}
|\mathcal{V}_1\cup\mathcal{V}_2\cup ... \cup\mathcal{V}_{kt}| \leq
\text{min}\{n, jd\frac{d^{kt} - 1}{d - 1}\},
\end{equation}
and the theorem is proved.
\end{proof}

It is easy to get the following corollary.
\begin{corollary}\label{corollary}
The predictions of all unlabeled nodes on graph $G$ will be sufficient
with $t$ iterations, where
\begin{equation}
t \in \Omega(\frac{1}{k}\text{log}_d(1 + \frac{n(d - 1)}{jd})).
\end{equation}
\end{corollary}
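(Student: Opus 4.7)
The plan is to invert the bound from Theorem~\ref{theorem} to extract a threshold on $t$ that guarantees every unlabeled node becomes sufficient. Theorem~\ref{theorem} tells us that after $t$ iterations at most $\min\{n,\, jd(d^{kt}-1)/(d-1)\}$ nodes are sufficient, so the iteration becomes fully sufficient as soon as the second term in the min reaches $n$ (using the semi-supervised assumption $|\mathcal{L}|\ll|\mathcal{U}|$ to identify ``all nodes'' with ``all unlabeled nodes'', exactly as in the proof of Theorem~\ref{theorem}).

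The first step I would take is to write down the sufficiency condition as the inequality
\begin{equation*}
jd\,\frac{d^{kt}-1}{d-1}\;\geq\; n.
\end{equation*}
Since $d\neq 1$ (hypothesis of Theorem~\ref{theorem}) and $j\geq 1$, every quantity is positive and the manipulation is clean. Rearranging gives $d^{kt}\geq 1+\frac{n(d-1)}{jd}$, and taking $\log_d$ of both sides yields
\begin{equation*}
t \;\geq\; \frac{1}{k}\log_d\!\left(1+\frac{n(d-1)}{jd}\right).
\end{equation*}
Any $t$ meeting this threshold therefore makes every unlabeled node sufficient, which is exactly the $\Omega(\cdot)$ statement of the corollary.

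The only subtlety worth flagging is interpretive rather than technical: the $\Omega$ notation here is being used to describe the iteration count that \emph{suffices} (an upper-bound-type guarantee on a lower bound for $t$), because Theorem~\ref{theorem} gives an upper bound on the number of sufficient nodes; once that upper bound is forced to exceed $n$, the true count of sufficient nodes must equal $n$. I would make this reading explicit in one sentence so that the direction of the bound is unambiguous. No calculation is genuinely hard here — the main obstacle is just being careful that the case $d=1$ is excluded (as stated in Theorem~\ref{theorem}) so that dividing by $d-1$ and applying $\log_d$ are legitimate, and that the approximation $|\mathcal{L}|\ll|\mathcal{U}|$ is invoked consistently with the preceding theorem's proof.
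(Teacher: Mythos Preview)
Your proposal is correct and mirrors the paper's proof almost exactly: the paper simply sets $jd\frac{d^{kt}-1}{d-1}\ge n$ and solves for $t$ to obtain $t\ge \frac{1}{k}\log_d\!\big(1+\frac{n(d-1)}{jd}\big)$. Your additional remarks on the $d\neq 1$ hypothesis, the $|\mathcal{L}|\ll|\mathcal{U}|$ approximation, and the somewhat loose use of $\Omega$ are sensible caveats that the paper itself does not spell out.
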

\begin{proof}
According to Theorem \ref{theorem}, at most $\text{min}\{n,
jd\frac{d^{kt} - 1}{d - 1}\}$ nodes are sufficient after $t$ iterations.
To ensure that all nodes on graph $G$ are sufficient, we let
$$
jd\frac{d^{kt} - 1}{d - 1} \ge n
$$ 
so that
\begin{equation}\label{equ:iteration_hops}
t \ge \frac{1}{k}\text{log}_d(1 + \frac{n(d - 1)}{jd}).
\end{equation}
\end{proof}

The result in Corollary \ref{corollary} shows that the relationship
between sufficient iterations and the maximum hop number in $K$ is in
inverse ratio. Increasing $k$ will decrease the required number of
iterations. The initial label rate $j$ and graph density $d$ also have
an influence on the iteration number. Yet, the effects are minor due to the
logarithm function.  Particularly, in a large-scale graph where $j \ll n$,
changing the label rate has negligible influence on the iteration
number. The same behavior has been shown in the experiment section. In
practice, we observe that GraphHop converges in few iterations 
(usually 10) since few iterations are required to achieve sufficiency. 

\subsection{Complexity Analysis}\label{subsec:complexity_analysis}

The time and memory complexities of GraphHop are significantly lower
that the traditional training in graph neural networks. The reason is
that GraphHop can do minibatch training, which is not available
GCN-based methods due to the neighbor expansion problem
\cite{chiang2019cluster}. The LR classifiers are directly trained based
on the output of Eq.  (\ref{label_propagation}), where minibatches can be easily
applied to matrix $\vect{H}_M^{(t - 1)}$ (see lines
\ref{alg_line:batch_1} and \ref{alg_line:batch_2} in Algorithm
\ref{alg:graphhop}) as input.  

Suppose that the total training minibatch number is $N$, the number of
iterations is $t$, the minibatch size is $b$, and the number of classes is $c$. Then, the time complexity 
of one minibatch propagation is 
$$
O(t\frac{||\vect{\tilde{A}}||_0}{N}c + tbc^2),
$$
where the first term comes from label aggregation while the second from 
label update. The memory usage complexity is 
$$
O(bc + c^2),
$$ 
which represents one minibatch embeddings and parameters for the
classifiers. Note that the parameter size is fixed and independent of
iterations $t$ and scales linearly in terms of the minibatch size $b$. 

\section{Experiments}\label{sec:experiments}

We conduct experiments to evaluate the performance of GraphHop with
multiple datasets and tasks. Datasets used in the experiments are
described in Sec.  \ref{subsec:datasets}. Experimental settings are
discussed in Sec. \ref{subsec:setting}. Then, the performance of
GraphHop is compared with state-of-the-art methods in small- and
large-scale graphs in Sec. \ref{subsec:benchmark}.  Finally, ablation
studies are given in Sec.  \ref{subsec:ablation_study}. 

\subsection{Datasets}\label{subsec:datasets}

We evaluate the performance of GraphHop on six representative graph
datasets as shown in Table \ref{tab:datasets}. Cora, CiteSeer, and
PubMed \cite{kipf2016semi} are three citation networks. Nodes are papers
while edges are citation links in these graphs. The task is to predict
the category of each paper.  PPI and Reddit \cite{hamilton2017inductive}
are two datasets of large-scale networks. PPI is a multi-label dataset,
where each node denotes one protein with multiple labels in the gene
ontology sets (121 in total).  Amazon2M \cite{chiang2019cluster} is by
far the largest graph dataset that is publicly available with over 2 millions nodes and 61
millions of edges obtained from Amazon co-purchasing networks. The raw
node features are bag-of-words extracted from product
descriptions. We use the principal component analysis (PCA)
\cite{hotelling1933analysis} to reduce their dimension to $100$. Also,
we use the top-level category as the class label for each node. 

\begin{table}[ht]
\renewcommand{\arraystretch}{1.3}
\caption{Statistics of six representative datasets used in experiments.}
\label{tab:datasets}
\centering
\begin{tabular}{ccccc} \hline
    Dataset & Vertices & Edges & Classes & Features Dims \\ \hline
    Cora & $2,708$ & $5,429$ & $7$ & $1433$ \\
    CiteSeer & $3,327$ & $4,732$ & $6$ & $3703$ \\
    PubMed & $19,717$ & $44,338$ & $3$ & $500$ \\
    PPI & $56,944$ & $1,612,348$ & $121$ & $50$ \\
    Reddit & $231,443$ & $11,606,919$ & $41$ & $602$ \\
    Amazon2M & $2,449,029$ & $61,859,140$ & $47$ & $100$ \\ \hline
\end{tabular}
\end{table}

\begin{table}[ht]
\renewcommand{\arraystretch}{1.3}
\caption{The training, validation, and testing splits used in the
experiments, where the node numbers and the corresponding percentages
(in brackets) are listed.} \label{tab:data_split}
\centering
\resizebox{0.48\textwidth}{!}{\begin{tabular}{c|cccc|c} \hline
    \multirow{2}{*}{Dataset} & \multicolumn{5}{c}{Data splits} \\
    \cline{2-6}
    & \multicolumn{4}{c|}{Train} & Validation \\ \hline
    PPI & $569 (1\%)$ & $1139 (2\%)$ & $2847 (5\%)$ & $5694 (10\%)$ & $5000$ \\ \hline
    Reddit & $2296 (1\%)$ & $4592 (2\%)$ & $11562 (5\%)$ & $22888 (10\%)$ & $5000$\\ \hline
    Amazon2M & $19606 (1\%)$ & $35259 (2\%)$ & $77700 (3\%)$ & $139556 (5\%)$ & $50000$ \\ \hline
\end{tabular}}
\end{table}

\begin{table}[ht]
\renewcommand{\arraystretch}{1.3}
\caption{Grid search ranges of the hyperparameters.}\label{tab:parameters_tuning}
\centering
\begin{tabular}{c|c} \hline
$T$ & $\{0.1, 1, 10, 100\}$\\ \hline
$\alpha$ & $\{0.01, 0.1, 1, 10, 100\}$ \\ \hline
$\beta$ & $\{0, 0.1, 1, 10\}$\\ \hline
\end{tabular}
\end{table}

\begin{table}[ht]
\renewcommand{\arraystretch}{1.3}
\caption{Hyperparameters used in the experiments for the largest label rate.}\label{tab:parameters}
\centering
\begin{tabular}{c|ccc}    \hline
    Datasets & $T$ & $\alpha$ & $\beta$ \\ \hline \hline
    Cora & $0.1$ & $10$ & $1$ \\     \hline
    CiteSeer & $0.1$ & $10$ & $1$ \\ \hline
    PubMed & $0.1$ & $1$ & $1$\\     \hline
    Reddit & $1$ & $1$ & $0$ \\     \hline
    PPI & $1$ & $1$ & $1$ \\     \hline
    Amazon2M & $1$ & $100$ & $100$ \\ \hline
\end{tabular}
\end{table}

\subsection{Experimental Settings}\label{subsec:setting}

We evaluate GraphHop and several benchmarking methods on the
semi-supervised node classification task in a transductive setting at a
number of label rates. For citation datasets, we first conduct
experiments by following the conventional train/validate/test split
(i.e., 20 labels per class) of the training set.  Next, we train models
at extremely low label rates (i.e., 1, 2, 4, 8, and 16 labeled
examples per class).  For the PPI, Reddit, and Amazon2M three
large-scale networks, the original data splits target at inductive
learning scenario, which do not fit our purpose.  To tailor them to the
transductive semi-supervised setting, we adopt fewer labeled training
examples. To be specific, for Reddit and Amazon2M, we pick the same
number of examples in each class randomly with multiple label rates for
training. For the multi-label PPI dataset, we simply select a small
portion of examples randomly in the training. A fixed size of remaining
examples is selected as validation while the rest is used as testing.
The full data splits are summarized in Table \ref{tab:data_split}. For
simplicity, we use the percentages of training examples to indicate
different data split in reporting performance results. 

We implement GraphHop in PyTorch \cite{paszke2017automatic}. For the LR
classifiers in the initialization stage and the iteration stage, we use
the same Adam optimizer with a learning rate of $0.01$ and
$5\times10^{-5}$ weight decay. The batch size is fixed to $512$ for
citation networks but adaptive for large-scale graphs since experiments
show there is a trade-off using different batch sizes for the latter
case. The training epochs are set to $1000$ with an early stopping
criterion which stops the classifiers from training and goes to the next
iteration. We set the maximum iteration to $100$ for citation datasets
and $200$ for large-scale networks. These numbers are large enough for
GraphHop to converge as shown in the experiments. For hyperparameters
$T$, $\alpha$, and $\beta$, we perform a grid search in the parameter
space based on the validation results.  The hyperparameter tuning
ranges and their final values are listed in Table
\ref{tab:parameters_tuning} and \ref{tab:parameters}, respectively. Note
that hyperparameters are tuned for different label rates.  We show
their values with respect to the largest label rates in Table
\ref{tab:parameters}.  All experiments were conducted on a machine with
a NVIDIA Tesla P100 GPU (16 GB memory), 10-core Intel Xeon CPU ($2.40$
GHz), and $100$ GB of RAM. 

\subsection{Performance Evaluation}\label{subsec:benchmark}

\begin{table*}[ht]
\renewcommand{\arraystretch}{1.3}
\caption{Classification accuracy (\%) for three citation datasets 
with different label rates. The highest accuracy in each
column is highlighted in \textbf{bold} and the top three are \underline{underlined}. }\label{tab:results_citation}
\centering
\setlength{\tabcolsep}{1.5mm}{
\begin{tabular}{c|cccccc|cccccc|cccccc}
    \hline
    & \multicolumn{6}{c|}{\textbf{Cora}} & \multicolumn{6}{c|}{\textbf{Citeseer}} & \multicolumn{6}{c}{\textbf{Pubmed}} \\
    \hline
    \# of labels per class & 1 & 2 & 4 & 8 & 16 & 20 & 1 & 2 & 4 & 8 & 16 & 20 & 1 & 2 & 4 & 8 & 16 & 20 \\
    \hline
    \hline
    \textbf{LP} & $51.5$ & $56.0$ & $61.5$ & $63.4$ & $65.8$ & $67.3$ &    $30.1$ & $33.6$ & $38.2$ & $40.6$ & $43.4$ & $44.8$ &    $\underline{55.7}$ & $58.8$ & $62.7$ & $64.4$ & $65.8$ & $66.4$  \\
    \textbf{DeepWalk} & $40.4$ & $47.1$ & $56.6$ & $62.4$ & $67.7$ & $69.9$ & $28.3$ & $31.5$ & $36.4$ & $40.1$ & $43.8$ & $45.6$ & - & - & - & - & - & - \\
    \textbf{LINE} & $49.4$ & $56.0$ & $63.0$ & $67.3$ & $72.6$ & $74.0$ &   $28.0$ & $31.4$ & $36.4$ & $40.6$ & $45.8$ & $48.5$ & - & - & - & - & - & - \\
    \textbf{GCN} & $42.4$ & $52.0$ & $65.0$ & $72.2$ & $\underline{78.4}$ & $\underline{80.2}$ &  $36.4$ & $43.4$ & $53.9$ & $60.4$ & $\underline{67.5}$ & $68.8$ &   $41.3$ & $48.1$ & $59.3$ & $67.4$ & $74.5$ & $\underline{77.8}$ \\
    \textbf{GAT} & $41.8$ & $51.8$ & $66.4$ & $73.6$ & $77.8$ & $\underline{79.6}$ &   $32.8$ & $40.7$ & $51.8$ & $57.9$ & $64.5$ & $68.2$ & - & - & - & - & - & - \\
    \textbf{DGI} & $\underline{55.3}$ & $\textbf{63.1}$ & $\textbf{71.8}$ & $\underline{74.5}$ & $77.2$ & $77.9$ &   $\underline{46.1}$ & $\underline{52.7}$ & $\textbf{61.7}$ & $\textbf{65.6}$ & $\underline{68.2}$ & $\underline{68.7}$ & - & - & - & - & - & - \\
    \textbf{Graph2Gauss} & $\underline{54.5}$ & $\underline{61.3}$ & $\underline{69.5}$ & $72.4$ & $74.8$ & $75.8$ &   $\underline{45.1}$ & $\underline{50.8}$ & $\underline{58.4}$ & $\underline{61.7}$ & $64.4$ & $65.7$ & - & - & - & - & - & - \\
    \textbf{Co-training GCN} & $53.1$ & $\underline{59.4}$ & $68.0$ & $73.5$ & $\underline{78.9}$ & $78.7$ &    $36.7$ & $42.9$ & $52.0$ & $57.9$ & $62.5$ & $65.9$ &    $55.1$ & $\underline{59.9}$ & $\underline{66.9}$ & $\underline{71.3}$ & $\textbf{75.7}$ & $\textbf{77.9}$ \\
    \textbf{Self-training GCN} & $40.6$ & $52.3$ & $67.5$ & $\underline{73.8}$ & $77.3$ & $79.1$ &  $34.6$ & $42.3$ & $54.4$ & $\underline{63.1}$ & $\textbf{68.3}$ & $\underline{69.1}$ &  $49.7$ & $56.2$ & $65.0$ & $68.9$ & $73.6$ & $76.5$ \\
    \hline
    \hline
    \textbf{GraphHop} & $\textbf{59.8}$ & $58.2$ & $\underline{69.3}$ & $\textbf{76.3}$ & $\textbf{79.7}$ & $\textbf{81.0}$ & $\textbf{48.4}$ & $\textbf{55.0}$ & $\underline{55.1}$ & $60.4$ & $66.7$ & $\textbf{70.3}$ & $\textbf{69.3}$ & $\textbf{70.9}$ & $\textbf{71.1}$ & $\textbf{71.9}$ & $\underline{75.0}$ & $77.2$ \\
    \hline
\end{tabular}}
\end{table*}

\begin{table*}[ht]
\renewcommand{\arraystretch}{1.3}
\caption{Classification accuracy (\%) for three large-scale graph
datasets, where the column of labeled examples is measured in terms of
percentages of the entire dataset and OOM means ``out of
memory".}\label{tab:results_large_scale}
\centering
\begin{tabular}{c|cccc|cccc|cccc} \hline
 & \multicolumn{4}{c}{\textbf{Reddit}} & \multicolumn{4}{|c}{\textbf{Amazon2M}} & \multicolumn{4}{|c}{\textbf{PPI}}\\ \hline
 \% of labeled examples & $1$ & $2$ & $5$ & $10$ & $1$ & $2$ & $3$ & $5$ & $1$ & $2$ & $5$ & $10$\\ \hline \hline
 \textbf{FastGCN} & $78.6$ & $80.2$ & $86.7$ & $87.7$ & OOM & OOM & OOM & OOM & - & - & - & -  \\ \hline
 \textbf{Cluster-GCN} & \underline{92.0} & \underline{92.7} & \underline{93.7} & \underline{94.0} & - & - & 
 $\textbf{68.8}$ & $\textbf{73.8}$ & - & - & - & -\\ \hline
 \textbf{L-GCN} & $89.2$ & $90.7$ & $92.0$ & $92.8$ & \underline{36.9} & \underline{47.6} 
 & $59.4$ & $65.8$ & \underline{68.3} & \underline{69.8} & \underline{69.2} & \underline{69.1} \\ \hline \hline
 \textbf{GraphHop} & $\textbf{93.4}$ & $\textbf{94.1}$ & $\textbf{94.7}$ & $\textbf{95.0}$ & $\textbf{51.4}$ & $\textbf{59.8}$ 
 & \underline{66.3} & \underline{68.2} & $\textbf{73.8}$ & $\textbf{73.4}$ & $\textbf{73.7}$ & $\textbf{74.1}$  \\ \hline
\end{tabular}
\end{table*}

We conduct performance benchmarking between GraphHop and several
state-of-the-art methods and compare their results for small- and
large-scale datasets below. 

\subsubsection{Citation Networks}\label{subsubsec:citation}

The state-of-the-art methods used for performance benchmarking
are grouped into three categories: 
\begin{itemize}
\item Unsupervised embedding methods: LP \cite{10.5555/3041838.3041953},
DeepWalk \cite{perozzi2014deepwalk}, LINE \cite{Tang_2015}, DGI
\cite{48921} and Graph2Gauss \cite{bojchevski2017deep}). 
\item Graph neural network (GNN) based methods: GCN
\cite{kipf2016semi} and GAT \cite{velickovic2018graph}. 
\item GCN with self-training: Co-training GCN and Self-training GCN
\cite{li2018deeper,sun2019multi}. 
\end{itemize}
Results on three citation datasets are summarized in Table
\ref{tab:results_citation}, where label rates are chosen to be 1, 2, 4,
8, 16 and 20 labeled nodes per class.  Each column shows the
classification accuracy (\%) for GraphHop and 9 benchmarking methods
under a dataset and a given label rate.

Overall, GraphHop performs the best. This is especially true for cases
with extremely small label rates. This is because its adoption of label
aggregation and LR classifier update yields a smooth distribution of
label embeddings on the graph for prediction, which relies less on label
supervision. Similarly, other embedding-based methods (e.g.  DGI and
Graph2Gauss) also outperform GNN variants for cases with very few labels
since their methods are designed to take advantage of the graph
structure into embeddings in an unsupervised way. When the label rate
goes higher, GNN variants perform better than unsupervised models. 
Li {\em et al.} \cite{li2018deeper} and Sun {\em et al.}
\cite{sun2019multi} showed the limitation of GCN in a few label cases and
proposed a co-training or self-training mechanism to handle this
problem.  Still, GraphHop outperforms their methods in various label
rates.  

\subsubsection{Large-Scale Graphs}\label{subsubsec:large}

To demonstrate the scalability of GraphHop, we apply it to three large-scale
graph datasets.  To save time and memory, only one-hop label embedding
is propagated at each iteration. We compare GraphHop with three
state-of-the-art GCN variants designed for large-scale graphs They are:
FastGCN \cite{chen2018fastgcn}, Cluster-GCN \cite{chiang2019cluster},
and L-GCN \cite{you2020l2}. For the three benchmarking methods, we tried
our best in adopting their released codes and following the same
settings as described in their papers. However, their original models
deal with supervised learning in an inductive setting. For a fair
comparison, we modify their model input so that the training is
conducted on the entire graph. 

The classification accuracy results for node label rates equal to 1\%,
2\%, 5\% and 10\% of the entire graph are summarized in Table
\ref{tab:results_large_scale}.  For the Reddit dataset, we can get
results for all methods. We see from the table that GraphHop performs
the best and Cluster-GCN the second in all cases.  For the super large
Amazon2M dataset, FastGCN exceeds the memory while Cluster-GCN fails to
converge when the label rates are 1\% and 2\%.  Furthermore, it takes
Cluster-GCN long time for one epoch training and we report the results
after $20$ epochs. There is a significant performance gap between
GraphHop and L-GCN in all cases for Amazon2M.  For the PPI multi-label
dataset, both FastGCN and Cluster-GCN fail to converge in the
transductive semi-supervised setting.  As the label rates increase in
PPI, the performance stays about the same. This is probably because a
small label rate (less than 10\%) is not large enough to cover all cases
in a multi-label scenario. 

\begin{figure*}[ht]
\begin{center}
\includegraphics[width=0.8\linewidth]{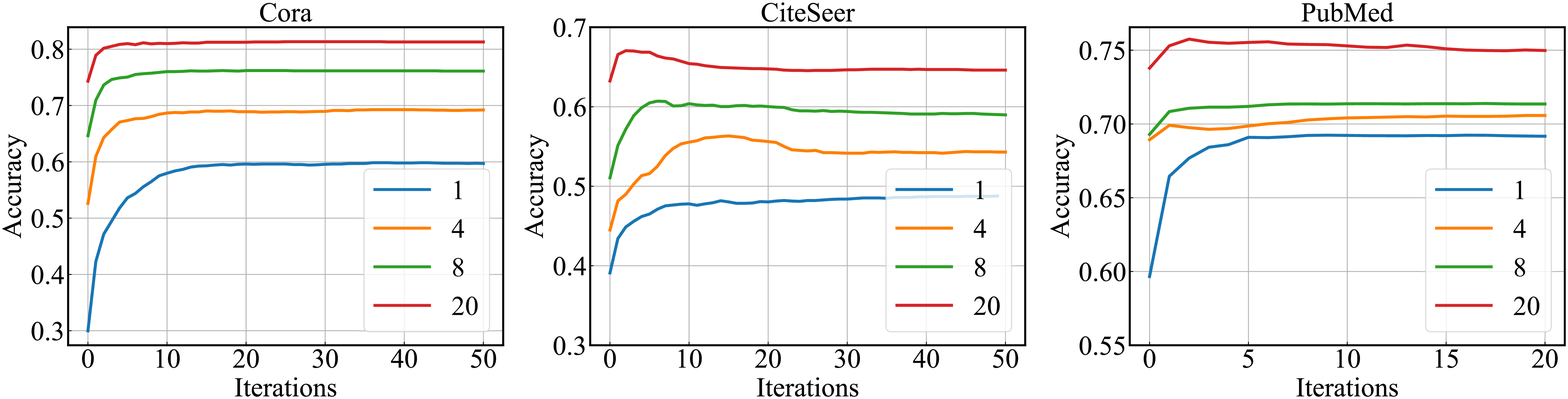}\\
\includegraphics[width=0.8\linewidth]{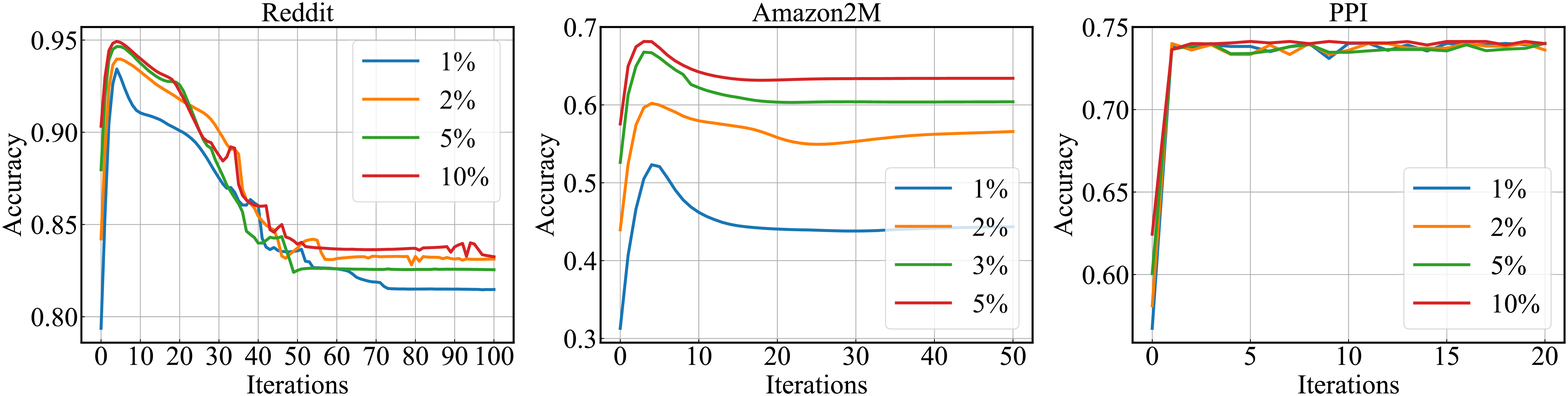}
\end{center}
\caption{GraphHop's test accuracy curves as a function of the iteration
number parameterized by different label rates, where the label rate is
expressed as the number and the percentage of labeled nodes per class for 
citation and large-scale graph datasets, respectively.} \label{fig:test_accuracy}
\end{figure*}

\begin{figure*}[ht]
\begin{center}
\includegraphics[width=0.8\linewidth]{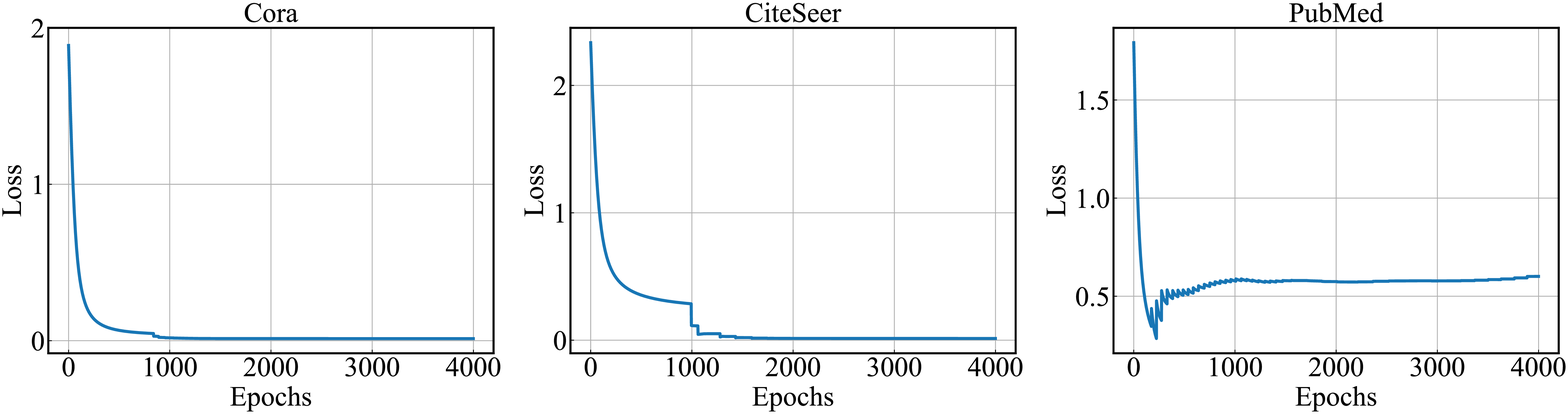}\\
\includegraphics[width=0.8\linewidth]{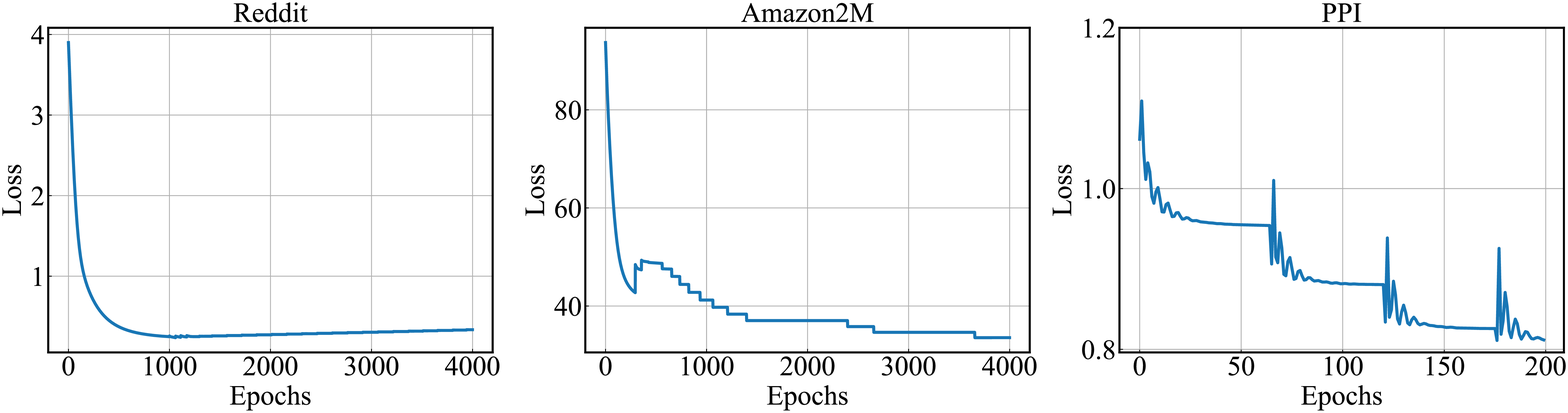}
\end{center}
\caption{Training loss curves of the LR classifiers of GraphHop
as a function of the epoch number for six datasets.}\label{fig:train_loss}
\end{figure*}

\begin{figure}[ht]
\centering
\includegraphics[width=0.7\linewidth]{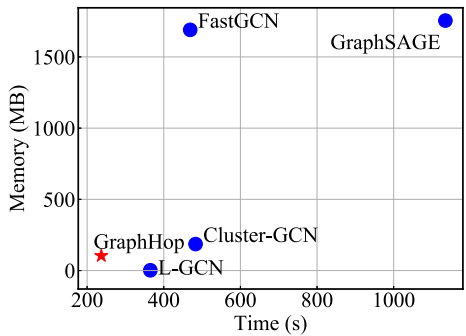}
\caption{Comparison of time complexity vs. memory usage of different 
methods on Reddit, where the lower left corner indicates the desired 
region that has low training complexity and low GPU memory consumption.} 
\label{fig:results_time_memory_a}
\end{figure}

\begin{figure}[ht]
\centering
\includegraphics[width=0.8\linewidth]{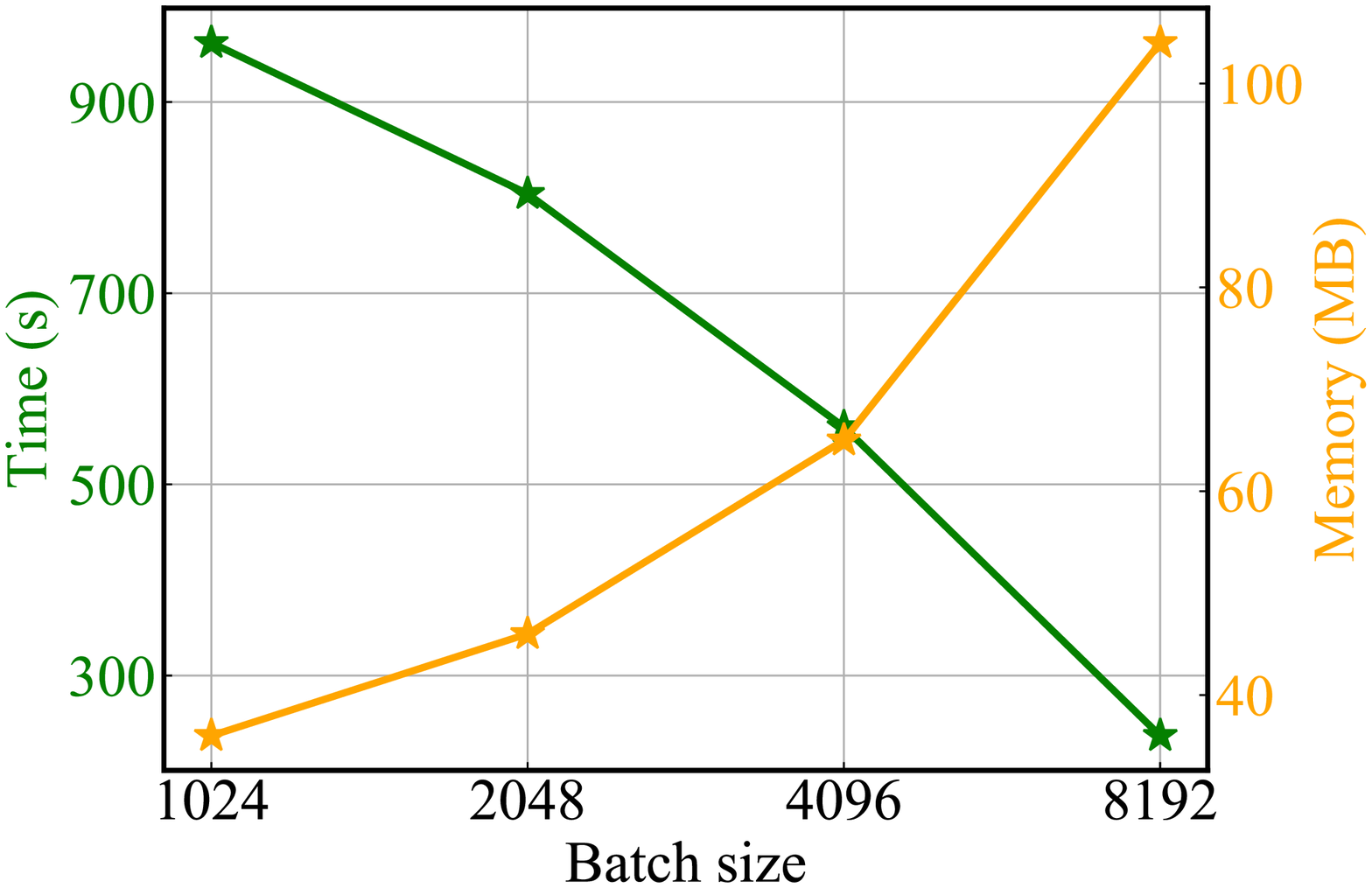} \\
\caption{The trade-off between training time and memory usage by varying
the training batch size on Reddit for GraphHop, where the green curve
indicates the training time and the orange curve indicates the memory
usage.} \label{fig:results_time_memory_b}
\end{figure}

\subsection{Computational Complexity and Memory Requirement}

Since GraphHop is an iterative algorithm, we study test accuracy curves
as a function of the iteration number for all six datasets with
different label rates in Fig.  \ref{fig:test_accuracy}.  We have two
main observations. First, for Cora, CiteSeer, and PubMed datasets,
test accuracy curves converge in about 10, 5, and 4 iterations,
respectively.  Second, for Reddit and Amazon2M, test accuracy
curves achieve their peaks in a few iterations, drop and then converge.
The latter is due to over-smoothing of label signals as explained in
Sec.  \ref{subsec:ablation_study}. Repeatedly applying smoothing operators will result in the convergence of feature vectors to the same values \cite{li2018deeper}. It can be alleviated by residual
connections \cite{he2016deep}. In practice, we can find the optimal
iteration number for each dataset by observing the convergence behavior
based on the evaluation data. 

Furthermore, we need to train an LR classifier for label embedding
initialization and propagation through iterative optimization.  To
achieve this objective, we show the training loss curves of the LR
classifiers as a function of the epoch number for all six datasets in
Fig.  \ref{fig:train_loss}.  We see that all LR classifiers converge
relatively fast.  Although fewer labeled nodes tend to demand more
iterations to achieve convergence, yet its impact on convergence
behavior is minor, which is consistent with Corollary \ref{corollary}. 

To demonstrate the efficiency and scalability of GraphHop, we compare
training time and memory usage of several methods in Table
\ref{tab:results_time_memory}. Here, we focus on benchmarking models
that can handle large-scale graphs such as GraphSAGE
\cite{hamilton2017inductive}, Cluster-GCN \cite{chiang2019cluster},
L-GCN \cite{you2020l2} and FastGCN \cite{chen2018fastgcn}.  For citation
networks of smaller sizes, we adopt their original codes and implement
them in a supervised way. For large-scale graphs (i.e., Reddit, PPI, and
Amazon2M), we follow the process discussed earlier and set the label
rate to the largest. We measure the averaged running time per epoch (or per
iteration for GraphHop) and the total training time in seconds. An early
stopping is adopted.  That is, we record the time when the performance
on the validation set drops continuously for five iterations.  For memory
usage, we only consider the GPU memory \footnote{It is measured by
\textsf{{torch.cuda.memory\_allocated()}} for PyTorch and
\textsf{{tf.contrib.memory\_stats.{MaxBytesInUse}()}} for
TensorFlow. }.

\begin{table*}[ht]
\renewcommand{\arraystretch}{1.3}
\caption{Comparison of training time and GPU memory usage. 
The averaged running time per epoch/iteration and the total running 
time are given outside and inside the parantheses, respectively.}\label{tab:results_time_memory}
\centering
\resizebox{0.98\textwidth}{!}{
\begin{tabular}{c|cc|cc|cc|cc|cc}
    \hline
    \multirow{2}{*}{} & \multicolumn{2}{c|}{\textbf{GraphSAGE}} & \multicolumn{2}{c|}{\textbf{Cluster-GCN}} & \multicolumn{2}{c|}{\textbf{L-GCN}} & \multicolumn{2}{c|}{\textbf{FastGCN}} & \multicolumn{2}{c}{\textbf{GraphHop}} \\
    \cline{2-11}
    & Time & Memory & Time & Memory & Time & Memory & Time & Memory & Time & Memory  \\
    \hline
    \textbf{Cora} & $0.033 (7)$s & $902$ MB & $0.142 (39)$s & $546$ MB & $0.004 (0.7)$s & $\textbf{3}$ MB & $0.023 (3)$s & $\underline{21}$ MB & $0.010 (36)$s & $26$ MB \\
    \hline
    \textbf{CiteSeer} & $0.059 (12)$s & $2288$ MB & $0.175 (56)$s & $723$ MB & $0.009 (1.4)$s & $\textbf{5}$ MB & $0.055 (4)$s & $74$ MB & $0.018 (56)$s & $\underline{68}$ MB \\
    \hline
    \textbf{PubMed} & $0.022 (5)$s & $418$ MB & $0.483 (148)$s & $808$ MB & $0.012 (1.9)$s & $\textbf{4}$ MB & $0.214 (7)$s & $81$ MB & $0.058 (78)$s & $\underline{9}$ MB  \\
    \hline
    \textbf{Reddit} & $5.7 (1135)$s & $1755$ MB & $3.3 (483)$s & $186$ MB & $2.3 (365)$s & $\textbf{2}$ MB & $13.7 (469)$s & $1690$ MB & $11.3 (237)$s & $\underline{104}$ MB\\
    \hline
    \textbf{Amazon2M} & $44.6 (913)$s & $2167$ MB & $1010 (10251)$s & $\underline{73}$ MB & $3.4 (549)$s & $\textbf{3}$ MB & OOM & OOM & $69.3 (762)$ & $302$ MB \\
    \hline
    \textbf{PPI} & $1.3 (26)$s & $110$ MB & - & - & $0.099 (16)$s & $\textbf{14}$ MB & - & - & $0.077 (46)$s & $\underline{21}$ MB \\
    \hline
\end{tabular}}
\end{table*}

Generally speaking, GraphHop can achieve fast training with low memory
usage.  Although L-GCN has the lowest memory usage, all parameters are
fixed without validation applied (validation data are counted in memory consumption for ours and other baselines) so that the comparison may not be
fair. To shed light on training complexity and memory usage, we plot the
time complexity versus memory usage of different methods on Reddit in
Fig. \ref{fig:results_time_memory_a} based on the data in Table
\ref{tab:results_time_memory}.  The lower-left corner of this figure
indicates the desired region that has low training complexity and low
GPU memory consumption. Furthermore, we can balance the training time
and memory usage by changing the batch size as shown in Fig.
\ref{fig:results_time_memory_b} for GraphHop. By increasing the batch
size, the memory consumption goes up in exchange for lower training
time. 

\subsection{Additional Observations}\label{subsec:ablation_study}

\subsubsection{Ablation Study} 

We explore two variants of GraphHop below.
\begin{itemize}
\item Variant I: GraphHop with the initialization stage only. \\
It utilizes center's and neighbors' node features for label
prediction without any label embeddings propagation 
\item Variant II: GraphHop without the initialization stage and LR
classification between iterations, which is the same as vanilla LP. \\
It propagates label embeddings without leveraging any feature
information in the initialization stage. 
\end{itemize}
We compare the test accuracy values of all three methods with the same label rate in Table
\ref{tab:results_variants}.  Clearly, GraphHop outperforms the other two
by significant margins.  Furthermore, we show test accuracy curves as a
function of the iteration number for Variant II and GraphHop in Fig.
\ref{fig:results_variants}. We see that GraphHop not only achieves
higher accuracy but also converges faster than Variant II. It indicates
the importance of good initialization of label embeddings using node
features and their update by classifiers in each iteration. 
Label aggregation and update can improve the performance of GraphHop 
furthermore.

\begin{table}[ht]
\renewcommand{\arraystretch}{1.3}
\caption{Comparison of accuracy performance of GraphHop and its
two variants.} \label{tab:results_variants}
\centering
\begin{tabular}{ccccc} \hline
     & \textbf{Cora} & \textbf{CiteSeer} & \textbf{PubMed} \\ \hline
    \textbf{Variant I} & $74.9$ & $64.0$ & $75.3$   \\
    \textbf{Variant II} & $67.3$ & $44.8$ & $66.4$  \\
    \textbf{GraphHop} & $\textbf{81.0}$ & $\textbf{70.3}$ & $\textbf{77.2}$ \\ \hline
\end{tabular}
\end{table}

\begin{figure*}[ht]
\centering
\includegraphics[width=0.8\linewidth]{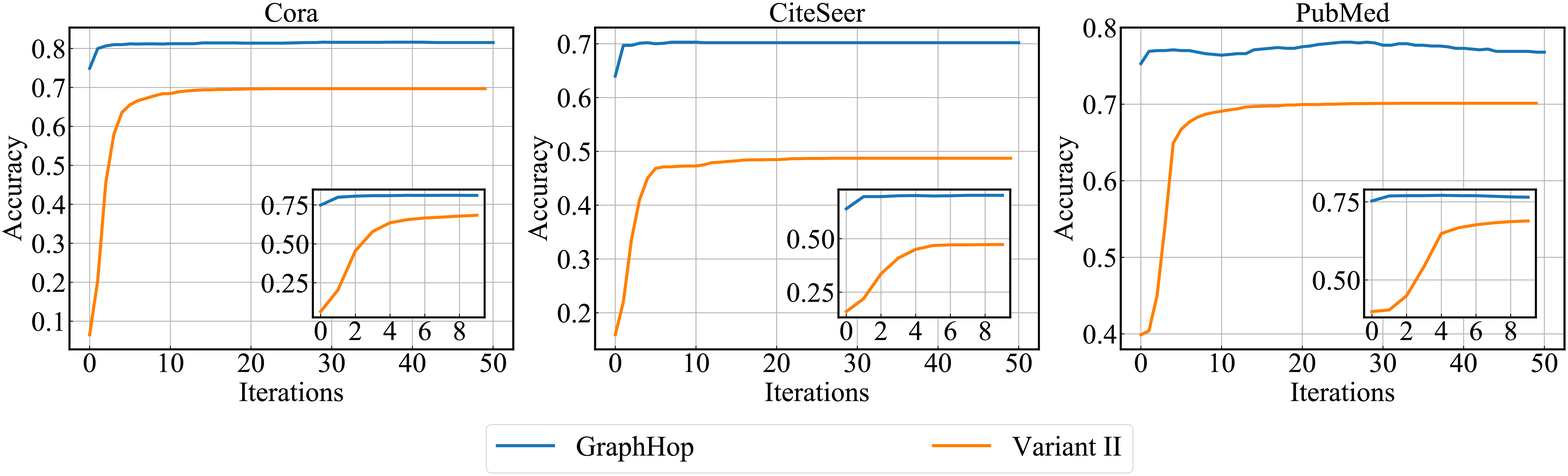}
\caption{The convergence curves of GraphHop and Variant II on
citation datasets, where inner figures show the initial 10 
iterations.}\label{fig:results_variants}
\end{figure*}

\begin{figure}[ht]
\centering
\includegraphics[width=0.45\textwidth]{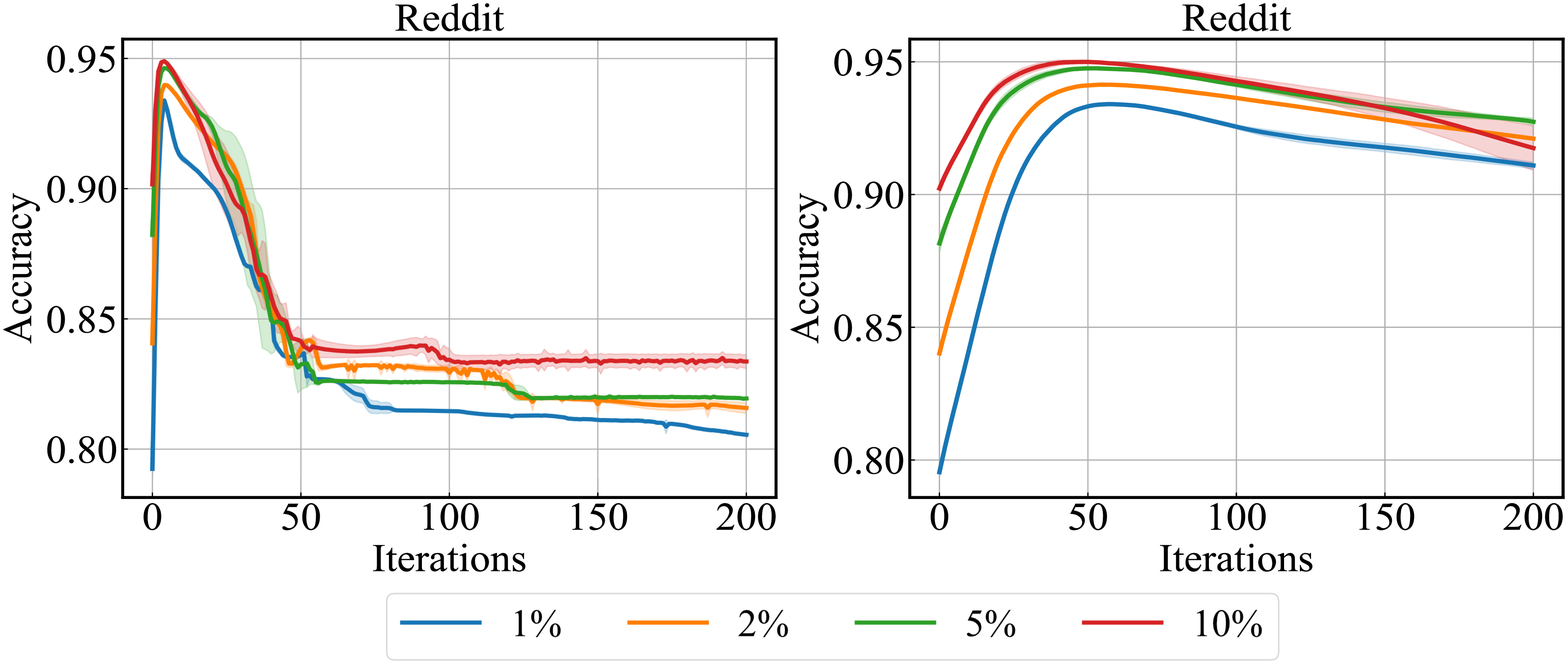}
\caption{The convergence of GraphHop's test accuracy curves with (right) 
and without (left) residual connections for Reddit, where shaded areas 
indicate the standard deviation range.} \label{fig:results_residual}
\end{figure}

\begin{figure}[!ht]
\centering
\includegraphics[width=0.45\textwidth]{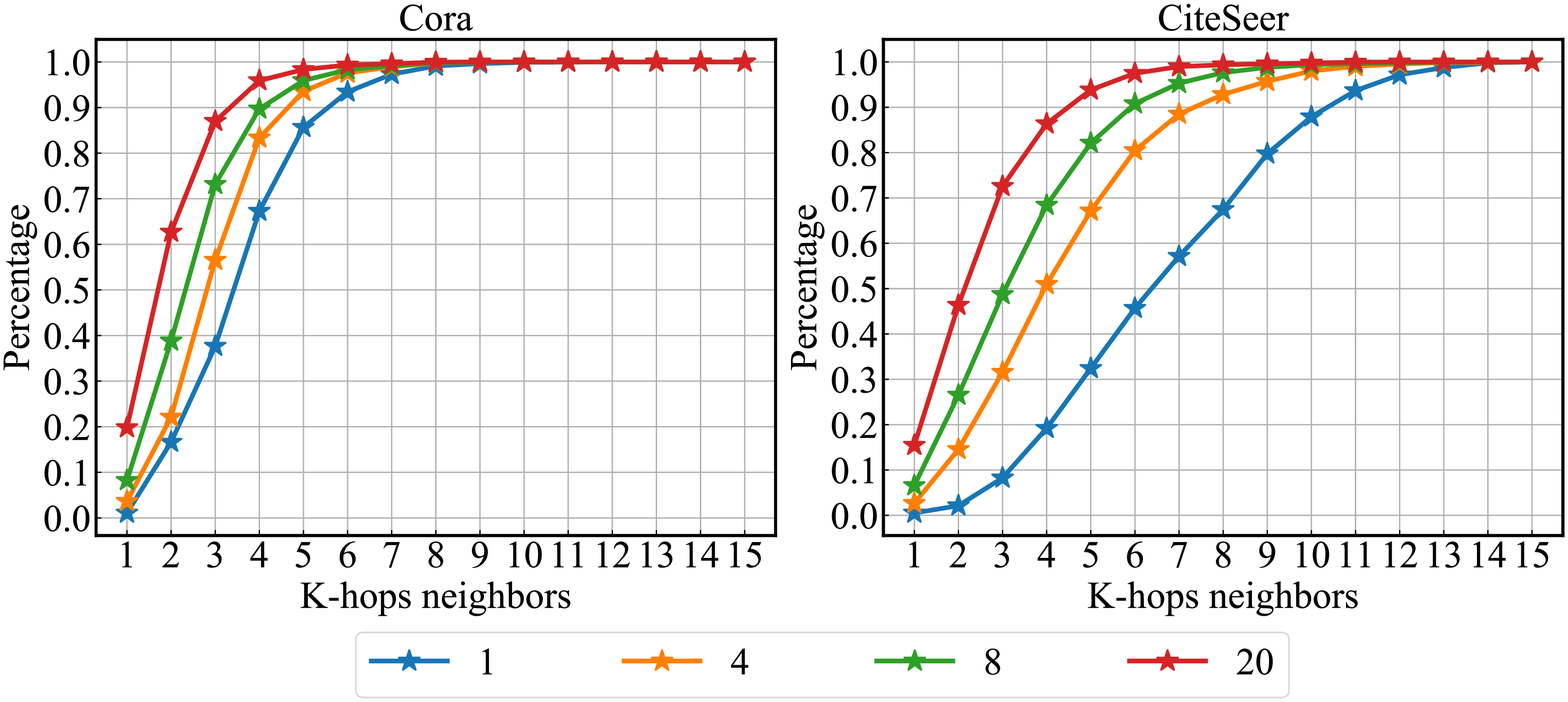}
\caption{Plots of the cumulative percentages of nodes in subset
$\mathcal{V}_1\cup\mathcal{V}_2\cup...\cup\mathcal{V}_k$, where different
colors indicates different label rates.} \label{fig:hops_number}
\end{figure}

\subsubsection{Over-Smoothing Problem}

We study the over-smoothing problem by examining the convergence
behavior of GraphHop on Reddit in Fig. \ref{fig:results_residual}.  The
left subfigure shows the averaged convergence curves of GraphHop on
Reddit with multiple label rates. The curves drop after 5 iterations and
converge at around 50 iterations. We argue that this phenomenon is due
to over-smoothing of label embeddings. Generally speaking, correlations
of embeddings are valid only locally. This is especially true for
large-scale graphs.  Adding uncorrelated information from long-distance
hops tends to have negative impact.  To verify this claim, we conduct
experiments on a variant of GraphHop, whose label embeddings are updated
in form of 
\begin{equation}\label{eq:residual}
\vect{H}^{(t + 1)} = (1 - \tau) \mathrm{p}_{\text {model}}(\vect{Y}|\vect{H}^{(t)}; \vect{\theta}) 
+ \tau \vect{H}^{(t)},
\end{equation}
where parameter $\tau \in (0, 1)$ is used to control the update speed.
Eq. (\ref{eq:residual}) is also known as the residual connection
\cite{he2016deep}. A large $\tau$ value enables the model to preserve
more information from the previous iteration and slows down the
smoothening speed. 
We report the results with $\tau = 0.9$ in the right subfigure while
keeping the other settings the same as the left subfigure. We observe
more stable curves with slower performance degradation. 

\subsubsection{Fast Convergence} 

We explain why convergence can be achieved by only a few iterations.
This is because a small number of iterations reaches sufficiency in
Corollary \ref{corollary}. Without loss of generality, we choose Cora
and CiteSeer datasets for illustration.  We calculate the number of
nodes in each subset, $\mathcal{V}_i$, in Eq.  (\ref{equ:theorem}) up to
$k$ hops.  The cumulative results are shown in Fig.
\ref{fig:hops_number}.  Both subfigures indicate that all nodes in the
graph can be reached from labeled examples within less than 10 hops.
Fast propagation of embeddings yields efficient label smoothing and
fast convergence. Also, we see from Fig.  \ref{fig:hops_number} that
higher label rates will reach sufficient iterations faster than lower
label rates. 

\section{Comments on Related Work}\label{sec:review}

\textbf{Semi-supervised Learning.} There is rich literature on
semi-supervised learning \cite{chapelle2009semi}, including generative
models \cite{kingma2014semi}, the transductive support vector machine
\cite{bennett1999semi}, entropy regularization
\cite{grandvalet2005semi}, manifold learning \cite{belkin2006manifold}
and graph-based methods \cite{zhou2003learning, liang2018lightweight,
iscen2019label, gong2015deformed}. Our discussion is restricted to
graph-related work.  Most semi-supervised graph-based methods are built
on the manifold assumption \cite{chapelle2009semi}, where nearby nodes
are close in the data manifold and, as a result, they tend to have the
same labels.  Early research penalizes non-smoothness along edges of a
graph with the Markov random field \cite{zhu2002towards}, the Laplacian
eigenmaps \cite{belkin2004semi}, spectral kernels
\cite{chapelle2002cluster}, and context-based methods
\cite{grover2016node2vec, perozzi2014deepwalk}.  Their main difference
lies in the choice of regularization. The most popular one is the
quadratic penalty term applied on nearby nodes to enforce label
consistency with the data geometry. The optimization result is shown to
be equivalent to LP \cite{chapelle2006label}.  Traditional graph methods
are non-parametric, discriminative, and transductive in nature, which
make them lightweight with good classification performance. To further
improve the performance, methods are developed by combining graph-based
regularization with other models
\cite{sen2008collective,iscen2019label}. Rather than regularizing on
label embeddings, this is extended to attributes \cite{weston2012deep}
and even to hidden layers or auxiliary embeddings in neural networks.
Manifold regularization \cite{belkin2006manifold} and Planetoid
\cite{yang2016revisiting} generalize the Laplacian regularizer with a
supervised classifier that imposes stronger constraints on the model
learning. 

\textbf{Graph Neural Networks.} Inspired by the recent success of
convolutional neural networks (CNNs) \cite{lecun2015deep,
lecun1995convolutional} on images and videos, a series of efforts have
been made to generalize convolutional filters from grid-structured
domains to non-Euclidean domains \cite{henaff2015deep,
bruna2013spectral, atwood2016diffusion, duvenaud2015convolutional} with
theoretical support from graph signal processing
\cite{shuman2013emerging}. The space spanned by the eigenvectors of the
graph Laplacian can be regarded as a generalization of the Fourier
basis. By following this idea, a deep neural architecture was formulated
in \cite{henaff2015deep, bruna2013spectral} to employ the Fourier
transform as a projection onto the eigenbasis of the graph Laplacian.
To overcome the expensive eigendecomposition, recurrent Chebyshev
polynomials were proposed in \cite{defferrard2016convolutional} as an
efficient filter for approximation. GCNs \cite{kipf2016semi} further
simplified it by only considering the first-order approximation in the
Chebyshev polynomials.  GCN has inspired quite a lot of follow-up work, e.g.,
\cite{48921, xu2018powerful, spinelli2020adaptive, wu2020comprehensive}. 

With the combination of layer-wise design and nonlinear activation, GCNs
offer impressive results on the semi-supervised classification problem.
Later, it was explained in \cite{wu2019simplifying, li2019label,
nt2019revisiting, li2018deeper} that the success of GCNs is due to a
lowpass filtering operation performed on node attributes. 
Specifically, \cite{li2019label, wu2019simplifying, nt2019revisiting}
have shown that the powerful feature extraction ability behind the graph
convolutional operation in GCNs is due to a low-pass filter applied on
feature matrix to extract only smooth signals for prediction. This
simplified graph convolution can be formulated as an LR
classifier on the aggregated features \cite{wu2019simplifying}.
That is, we have
\begin{equation}\label{equ:simplify_gcn}
\vect{H} = \mathrm{p}_\text{model}(\vect{Y}|\vect{S}^k\vect{X}; \vect{\theta}),
\end{equation}
where $\mathrm{p}_\text{model}(\cdot)$ is the LR classifier, $\vect{S}^k$ is
the $k$-th power of the normalized adjacency matrix, $\vect{S}$, $\vect{\theta}$ is
the classifier parameters, and $\vect{H}$ is the label embeddings output as
defined previously. This shed light on a simple filter design on graph
feature matrix. However, in order to make a stronger low-pass filter,
the power $k$ should be large, which induces an expensive cost $O(kn^3)$
in multiplying adjacency matrix and further be scaled to large graphs.
Besides, there is no restriction on the smoothness of the predicted
label embeddings. 

Despite the strength of GCNs, they are limited in two aspects: 1)
effective modeling of node attributes and labels jointly; 2) ease of
scalability to large graphs. For the first point, unlabeled examples are
not integrated into model training but only inference.  Several algorithms
have been proposed to tackle this deficiency. Zhang {\em et al.}
\cite{zhang2019bayesian} employed a Bayesian approach by modeling the
graph structure, node attributes, and labels as a joint probability and
inferring the unlabeled examples by calculating the posterior
distribution. On the other hand, Qu {\em et al.} \cite{qu2020gmnn} used
the conditional random field to embed the correlation between labeled
and unlabeled examples with GCNs for feature extraction.  In practice,
these methods are costly and incurred by the local minimum during
optimization. Another line of methods employed self-training techniques
to generate pseudo-labels for unlabeled examples and used them
throughout training \cite{li2018deeper, you2020does, sun2019multi}.
However, they do not utilize the correlation between labeled and
unlabeled examples effectively and often suffer from label error
feedback \cite{lee2013pseudo}. Recently, active learning methods
\cite{li2020seal, yu2018active} have been proposed, which directly query
for labeled examples and train the model in an incremental way. They
are more sophisticated in model design and more difficult to optimize. 

For the second point, one main issue of GCNs is the large number of
model parameters, which make their optimization very difficult for large
graphs. Unlike images in computer vision or sentences in natural
language processing, one graph can be large in its size while its nodes
are connected without segmentation.  The layer-wise convolutional
operation introduces an exponential expansion of neighborhood sizes
\cite{chiang2019cluster}, which hinders GCNs from batch training.
Sampling-based strategies (e.g. GraphSAGE \cite{hamilton2017inductive}
and FastGCN \cite{chen2018fastgcn}) have been proposed to overcome this
problem. They attempt to reduce the neighborhood size during
aggregation. Alternatively, some methods \cite{chiang2019cluster,
zeng2019graphsaint} directly sample one or more subgraphs and perform
subgraph-level training.  Recently, You {\em et al.} \cite{you2020l2}
proposed a layer-wise training algorithm for GCNs, which is called
L-GCN.  The idea is that, instead of training multiple GCN layers at
once, the gradient update and parameters convergence are performed in a
layer-wise fashion. Yet, L-GCN requires a large amount of training data
(i.e., heavily supervised learning) and its performance degrades
dramatically when only a few labeled examples are available in training. 

\section{Conclusion}\label{sec:conclusion}

A novel iterative method, called GraphHop, was proposed for transductive
semi-supervised node classification on graph-structured data. It first
initializes label embedding vectors through predictions based on node
features. Then, based on the smooth label assumption on graphs, GraphHop
propagated label embeddings iteratively to regularize the initial
inference furthermore. Instead of directly replacing the embeddings by
neighbor averages at each iteration, GraphHop adopts an LR classifier
between iterations to align embeddings between neighbors and itself
effectively. Incorporation of label initialization, label aggregation
from neighbors, and label update via classifiers makes GraphHop both
efficient and lightweight. Experimental results on various scales of
networks demonstrate that the proposed GraphHop method offers an
effective semi-supervised solution to node classification, which is
especially true in extremely few labeled examples. 

\appendices

\section{Implementation of LR Classifier}\label{sec:LR}

We explain the implementation details of the LR classifier here.
For labeled examples, the supervised loss term can be written as
\begin{equation}\label{equ:label_loss}
L_l =\frac{1}{|\mathcal{L}|} \sum_{\substack{y \in \mathcal{L} \\ 
\vect{h}_M^{(t - 1)} \in \vect{H}^{(t - 1)}_{M, l}}} \mathrm{H}\left(y, 
\mathrm{p}_{\text {model}}(y \mid \vect{h}_M^{(t - 1)} ; \vect{\theta})\right),
\end{equation}
where $\mathrm{H}(p, q)$ is the entropy loss for classification and
$\vect{\theta}$ denotes the parameters of the classifier. The missing labels
for unlabeled examples prevent the direct supervision. However, we can
leverage the label embedding from node itself as supervision and treat
propagated embeddings as input. The explicit results are the probability
predictions from the classifiers that are consistent between neighbors and
node itself. 

Similar ideas can be employed as regularization, so-called consistency
regularization \cite{tarvainen2017mean, berthelot2019mixmatch, 8417973},
which enforces model predictions to be consistent under any input
transformations. Then, the loss term for unlabeled examples can be
expressed as
\begin{equation}\label{equ:unlabel_loss}
L_u =\frac{1}{|\mathcal{U}||C|} \sum_{\substack{\vect{h}^{(t - 1)} 
\in \vect{H}^{(t - 1)}_u \\ 
\vect{h}_M^{(t - 1)} \in \vect{H}^{(t - 1)}_{M, u}}} \mathrm{H}\left(\vect{h}^{(t -
1)}, \mathrm{p}_{\text {model }}(y \mid \vect{h}_M^{(t - 1)} ;
\vect{\theta})\right).
\end{equation}

Given the loss terms for labeled and unlabeled examples in Eqs.
(\ref{equ:label_loss}) and (\ref{equ:unlabel_loss}), respectively, 
the final loss function is a weighted sum of the two,
\begin{equation}\label{equ:label_unlabel_loss}
L = L_l + \alpha L_u,
\end{equation}
where $\alpha$ is a hyperparameter. The LR classifier are trained in
multiple epochs until convergence. Finally, new label embeddings are
predicted by 
\begin{equation}\label{equ:infer}
\vect{H}^{(t)} = \mathrm{p}_\text{model}(\vect{Y}|\vect{H}_M^{(t - 1)};\vect{\theta}).
\end{equation}

Before training the LR classifier, we perform one additional step
inspired by entropy minimization \cite{grandvalet2005semi} and knowledge
distillation \cite{hinton2015distilling} for
supervision in Eq. (\ref{equ:unlabel_loss}). That is, given label
embeddings, we apply a sharpening function to adjust the entropy of the
label distribution. A temperature is introduced to alternate the
categorical distribution, which is defined as
\begin{equation}
\text{Sharpen}(p, T)_i = p_i^{\frac{1}{T}} \bigg/ \sum_{j = 1}^C p_j^{\frac{1}{T}},
\end{equation}
where $p$ is the input categorical distribution (i.e. label embeddings
$\vect{H}$ in GraphHop) and $T$ is the temperature hyperparameter. Using a
higher value for $T$ produces a softer probability distribution over
classes, and vice versa. By adjusting the temperature, models can decide
how confidence they should believe in the current label embeddings
iteration. 

The sharpening operation may result in uniform label distributions by
setting a large temperature. To enforce the classifier outputs
low-entropy predictions on unlabeled data, we minimize the
entropy \cite{berthelot2019mixmatch, grandvalet2005semi} of model
prediction $\mathrm{p}_\text{model}(y|x;\theta)$ with an additional 
loss term
\begin{equation}
L_{u}^{\prime}=\frac{1}{|\mathcal{U}||C|} \sum_{\vect{h}^{(t -
1)}_M \in \vect{H}_{M, u}^{(t - 1)}}
\mathrm{H}\left(\mathrm{p}_{\text{model}},
\mathrm{p}_\text{model}\right).
\end{equation}
In practice, the final loss can be written as
\begin{equation}
L = L_{l} + \alpha L_{u}+\beta 
L_{u}^{\prime},
\label{equ:final_loss}
\end{equation}
where $\beta$ is a hyperparameter to adjust the scale of the entropy.


\bibliographystyle{IEEEtran}
\bibliography{graphhop}

\end{document}